\def\eqref#1{equation~\ref{#1}}
\def\1{\bm{1}}
\DeclareMathAlphabet{\mathsfit}{\encodingdefault}{\sfdefault}{m}{sl}
\SetMathAlphabet{\mathsfit}{bold}{\encodingdefault}{\sfdefault}{bx}{n}
\DeclareMathOperator*{\argmin}{arg\,min}
\theoremstyle{plain}
\newtheorem{theorem}{Theorem}[section]
\newtheorem{proposition}[theorem]{Proposition}
\newtheorem*{proposition*}{Proposition}
\newtheorem{lemma}[theorem]{Lemma}
\newtheorem{corollary}[theorem]{Corollary}
\theoremstyle{definition}
\newtheorem{assumption}{Assumption}
\theoremstyle{remark}
\newenvironment{restatetheorem}[1]{%
  \begingroup
  \theorem
}{%
  \endtheorem
  \endgroup
}
\newenvironment{restateproposition}[1]{%
  \begingroup
  \proposition
}{\endproposition\endgroup}
\newcommand{\x}{\mathbf{x}}
\newcommand{\X}{\mathbf{X}}
\newcommand{\n}{\mathbf{n}}
\newcommand{\s}{\mathbf{s}}
\newcommand{\D}{\mathcal{D}}
\newcommand{\norm}[1]{\left\lVert#1\right\rVert}
\newcommand{\appropto}{\mathrel{\vcenter{
  \offinterlineskip\halign{\hfil$##$\cr
    \propto\cr\noalign{\kern2pt}\sim\cr\noalign{\kern-2pt}}}}}
\title{Score-Based Density Estimation \\ from Pairwise Comparisons}
\author{%
  Petrus Mikkola \\
  Department of Computer Science\\
  University of Helsinki\\
  \texttt{petrus.mikkola@helsinki.fi} \\
  \And
  Luigi Acerbi\thanks{Equal contribution.} \\
  Department of Computer Science\\
  University of Helsinki \\
  \texttt{luigi.acerbi@helsinki.fi} \\
  \AND
  Arto Klami\footnotemark[1] \\
  Department of Computer Science\\
  University of Helsinki \\
  \texttt{arto.klami@helsinki.fi} \\
}
\begin{document}

\maketitle

\begin{abstract}
We study density estimation from pairwise comparisons, motivated by expert knowledge elicitation and learning from human feedback. We relate the unobserved target density to a tempered winner density (marginal density of preferred choices), learning the winner's score via score-matching. This allows estimating the target by `de-tempering' the estimated winner density's score. We prove that the score vectors of the belief and the winner density are collinear, linked by a position-dependent tempering field. We give analytical formulas for this field and propose an estimator for it under the Bradley--Terry model. Using a diffusion model trained on tempered samples generated via score-scaled annealed Langevin dynamics, we can learn complex multivariate belief densities of simulated experts, from only hundreds to thousands of pairwise comparisons.
\end{abstract}

\section{Introduction}\label{sec_introduction}

Several complementary techniques, from flows \citep{rezende2015variational,lipman2023flow} to diffusion models \citep{ho2020denoising}, can today efficiently learn complex densities $p(\x)$ from examples $\x \sim p(\x)$. With sufficiently large data, we can learn accurate densities even over high-dimensional spaces, such as natural images \citep{Rombach2022}. While challenges persist in the most complex cases, these models have achieved a high level of performance, proving sufficient for many tasks.

We consider the fundamentally more challenging problem of learning the density not from direct observations but solely from \emph{comparisons of two candidates}. Given $\x$ and $\x'$ that are \emph{not} sampled from the target $p(\x)$ but rather from a distinct sampling distribution $\lambda(\x)$ satisfying suitable regularity conditions, the task is to learn $p(\x)$ from triplets $(\x, \x', \x \succ \x')$.  The last entry indicates which alternative has higher density (the \emph{winner} point). Being able to do this enables cognitively easy elicitation of subjective beliefs of an individual  over random vectors. The canonical use-case is encoding expert knowledge into statistical models as prior information, with established literature in statistics dedicated to this problem of \emph{prior elicitation} \citep{ohagan:2019,mikkola2023}. Here the belief is typically over a relatively low-dimensional space, but it needs to be inferred from a very limited number of observations to keep the expert effort manageable. Recently, elicitation tools have been increasingly used to quantify large language model (LLM) knowledge in probabilistic terms \citep{pmlr-v267-capstick25a,requeima2024llm}, for instance, to evaluate calibration or to use them as probabilistic cognitive models \citep{binz2024turning} or forecasting models \citep{halawi2024approaching}. The current methods require dedicated techniques for direct prompting of probabilities, samples \citep{requeima2024llm} or moments \citep{pmlr-v267-capstick25a}, whereas our formulation only requires comparative queries that LLMs can readily answer. Finally, the problem setup is also related to learning from human feedback \citep{ouyang2022training}, in particular to learning an implicit preference distribution for a generative model \citep{dumoulin2024}.

Recently, \citet{mikkola2024prefernetialflows} proposed the first solution for this problem, learning normalizing flows from pairwise comparisons and rankings. We propose an improved solution that also uses random utility models (RUMs; \citealp{train2009discrete}) for modeling the preferential data and is inspired by their idea of relating the target density $p(\x)$ to a \emph{tempered} version of the distribution of winner points, $p_w^{\tau}(\x)$, for some \emph{tempering parameter} $\tau \geq 1$. Since we have samples from $p_w(\x)$, this relationship leads to practical algorithms once $\tau$ is estimated. In contrast to their empirically motivated heuristic link, we characterize this connection in detail and provide an \emph{exact} relationship between the \emph{scores} of $p(\x)$ and $p_w(\x)$. Since the relationship holds for the scores, it is natural to also switch to solving the problem with score-based models \citep{song2019generative,songscore}, instead of flows. This brings additional benefits, for instance in modeling multimodal targets, and we empirically demonstrate a substantial improvement in accuracy compared to \citet{mikkola2024prefernetialflows}. While they could learn densities from a modest number of rankings, they needed additional regularization to avoid escaping probability mass~\citep{nicoli2023detecting}. Moreover, their best accuracy required more informative multiple-item rankings. In contrast, we focus solely on pairwise comparisons, which are easier to answer and more reliable~\citep{kendall1940paired,shah2008heuristics}, and widely used in AI alignment \citep{ouyang2022training,wallace2024diffusion}.

\begin{figure}[t]
  \centering
  \setlength{\tabcolsep}{2pt}
  \renewcommand{\arraystretch}{1.0}
  \begin{tabular}{@{}c|c@{}c@{}c@{}}
    \subfigure[Problem setup]{%
      \raisebox{0.5ex}{\includegraphics[scale=0.182]{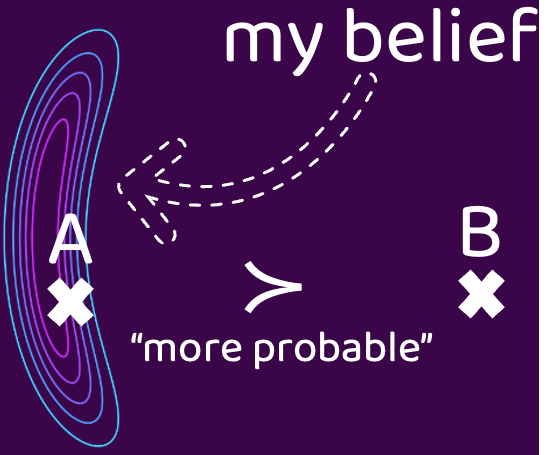}}} &
    \subfigure[MWD]{%
      \includegraphics[scale=0.317]{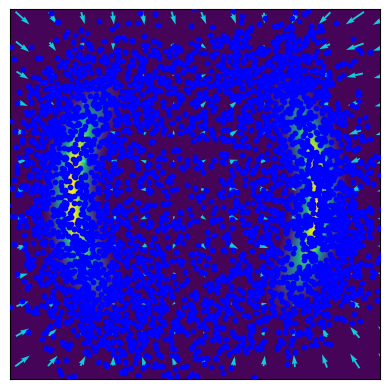}} &
    \raisebox{5\height}{\Large$\times$} &
    \subfigure[Tempering field]{%
      \includegraphics[scale=0.22]{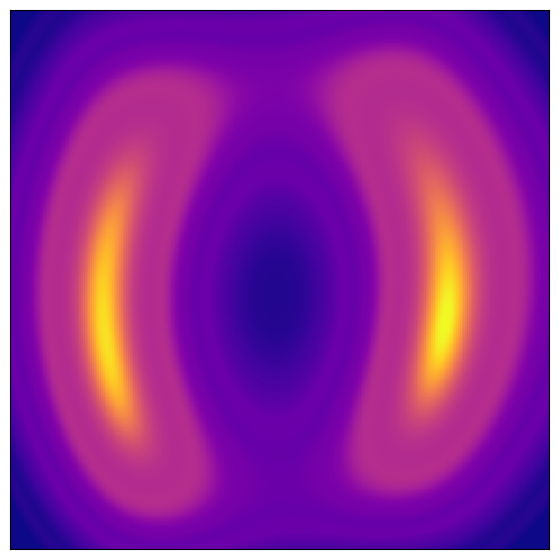}}
    \raisebox{8\height}{\Large$=$} 
    \subfigure[`Tempered' MWD]{%
      \includegraphics[scale=0.317]{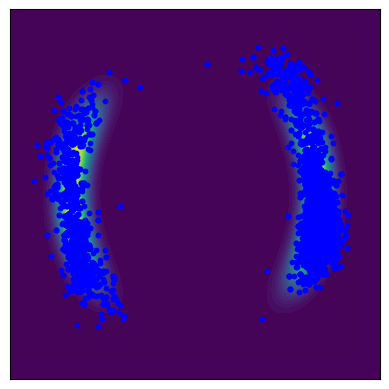}}
  \end{tabular}
  \vspace{-5pt}
  \caption{(a) Problem setup. An expert holds a subjective belief over a parameter space, such as the likely hyperparameters of a learning algorithm (e.g. \emph{learning rate} and \emph{weight decay}), and can answer questions like \emph{"Do you expect configuration A or B to work better?"}. We learn their belief as a density, to be used e.g. as a prior distribution for finding optimal hyperparameters. (b)-(d) Density estimation from $200$ uniformly sampled pairwise comparisons, with the target density shown as a heatmap. (b) Samples and the score field at an intermediate noise level $\sigma$, for a diffusion model trained on the (winner, loser) pairs to model the marginal winner density (MWD). (c) Estimated tempering field. (d) Samples from the score-scaled annealed Langevin dynamics with the MWD score and a tempering field estimate. Samples align well with the target density, demonstrating the fundamental relationship between the scores of the estimable MWD and the latent target (belief density).}\label{fig1}
  \vspace{-10pt}
\end{figure}

Denote by $p_{\x \succ \x'}(\x,\x')$ the joint density of the available data, encoding the preferred candidate in the order of the arguments. 
The \emph{marginal winner density} (MWD), denoted by $p_w(\x)$, is obtained as its marginal as $p_w(\x) = \int p_{\x \succ \x'}(\x,\x') d\x' \propto \int \mathbb{P}(\x \succ \x') \lambda(\x)\lambda(\x')d\x'$ where $\lambda(\x)$ is the sampling density of the (independent) candidates. Our main theoretical contribution is a novel, \emph{exact} relationship between the target $p(\x)$ and the MWD $p_w(\x)$ in terms of their scores: up to a reparameterization of the space, we have $\nabla \log p(\x) = \tau(\x) \nabla \log p_w(\x)$. Critically, $\tau(\x)$ is not constant but a position-dependent \emph{tempering field}. This implies we can perfectly recover $p(\x)$ from the estimable $p_w(\x)$ with score-based methods if the tempering field is known. We prove this foundational relationship for the popular Bradley–Terry model \citep{bradley1952rank,touvron2023llama} and an exponential noise RUM, providing explicit formulas for the tempering fields.

Our second contribution is a practical algorithm derived from our theoretical insights. First, we propose to model the preference relationships by estimating the score of the joint density $p_{\x \succ \x'}(\x,\x')$, then train a continuous-time diffusion model \citep{karras2022elucidating} to recover the MWD by marginalizing it. Building on the ideal tempering field under the Bradley--Terry model, we estimate the tempering field $\tau(\x)$ by using the analytical formula with importance samples from the trained MWD model and a simple density ratio model trained on the pairwise comparison data. Finally, we sample from the belief density $p(\x)$ by running score-scaled annealed Langevin dynamics \citep{song2019generative} with the MWD score and $\tau(\x)$.
Fig.~\ref{fig1} illustrates our approach.

\section{Background}\label{sec_background}

\subsection{Denoising score matching and annealed Langevin dynamics}

The (Stein) score of a probability density function $p(\x)$, denoted $\nabla_{\x} \log  p(\x)$, is a vector field pointing in the direction of maximum log-density increase. Score-based generative methods approximate this score. They typically start by defining a family of perturbed densities $p_{\sigma}(\x)$ by convolving $p(\x)$ with noise at varying levels $\sigma > 0$; for example,  $p_{\sigma}(\x) = p(\x) \ast \mathcal{N}(\x ; \mathbf{0}, \sigma^2 \mathbf{I})$, where $\ast$ denotes convolution. A neural network $\s_{\theta}(\x, \sigma)$ with parameters $\theta$ is then trained to model the score of these perturbed densities, $\nabla_{\x} \log p_{\sigma}(\x)$. 
This score network $\s_{\theta}$ is commonly trained through \textit{denoising score matching} \citep{vincent2011connection}, by minimizing the objective:
\begin{align}\label{denoising_score_matching}
\mathcal{L}(\theta) = \mathbb{E}_{\x \sim p(\x)}\mathbb{E}_{\sigma \sim p_{\text{train}}(\sigma)}\mathbb{E}_{\tilde{\x} \sim p_{\sigma}(\tilde{\x}|\x)}\ell(\sigma)\norm{\nabla_{\tilde{\x}} \log p_{\sigma}(\tilde{\x}|\x)- \s_{\theta}(\tilde{\x},\sigma)}^2.
\end{align}
Here, $\tilde{\x}$ is a noisy version of a clean sample $\x$, generated via the perturbation kernel $p_{\sigma}(\tilde{\x}|\x)$ (e.g., an isotropic Gaussian $\mathcal{N}(\tilde{\x} ; \x,\sigma^2 \mathbf{I})$).
The network is trained to predict the score of $p_{\sigma}$ by minimizing the objective in Eq.~\ref{denoising_score_matching}, where the perturbation kernel is typically tractable. The function $\ell(\sigma)$ provides a positive weighting for different noise levels. 
The noise levels $\sigma$ are drawn from a distribution $p_{\text{train}}(\sigma)$ following either a discrete, often uniform schedule $(\sigma_t)_{t=1}^T$ \citep{song2019generative}, or a continuous one~\citep{karras2022elucidating}.

Once trained, $\s_{\theta}(\x, \sigma)$ enables sampling from an approximation of $p(\x)$. One prominent method, besides reverse diffusion processes (discussed later), is \textit{annealed Langevin dynamics} (ALD) \citep{song2019generative}. ALD starts with samples $\x_T^{(0)}$ from a broad prior (e.g., $\mathcal{N}(\x \mid \mathbf{0}, \sigma_{\max}^2 \mathbf{I})$) and iteratively refines them. It runs $L$ steps of Langevin MCMC per noise level $\sigma_t$ along a decreasing schedule $\sigma_{\max}=\sigma_T > \ldots > \sigma_1=\sigma_{\min}$:
\begin{align}\label{ALD}
\x_t^{(l)} = \x_{t}^{(l-1)} + \epsilon_t \, \s_{\theta}(\x_t^{(l-1)}, \sigma_t) + \sqrt{2\epsilon_t} \, \n_t^{(l)},\quad l = 1, 2, \ldots, L,
\end{align}
with step size $\epsilon_t > 0$ and $\n_t^{(l)} \sim \mathcal{N}(\mathbf{0}, \mathbf{I})$. For $t < T$, $\x_t^{(0)} = \x_{t+1}^{(L)}$. Under ideal conditions ($L \to \infty$, $\epsilon_t \to 0$, accurate $\s_{\theta}$), $\x_1^{(L)}$ approximates a sample from $p_{\sigma_{\min}}(\x) \approx p(\x)$ \citep{welling2011bayesian}.

\subsection{Diffusion models}

A continuous-time diffusion model describes a forward process that gradually transforms a data distribution $p(\x)$ into a simple, known prior distribution (e.g., a Gaussian). This process is often defined by a forward-time stochastic differential equation (SDE) \citep{songscore}:
\begin{align*} 
    d\x = f(\x,t)dt + g(t)d\mathbf{b}, 
\end{align*} 
where $\mathbf{b}$ is Brownian motion, $f(\x,t)$ is the drift coefficient, and $g(t)$ is the diffusion coefficient. If $\x(0) \sim p(\x)$ (the target density), its time-evolved density is $p_t(\x)$. If $f$ is an affine transformation, then the transition kernel $p(\x(t)|\x(0))$ is Gaussian and for a sufficiently large $T > 0$, the marginal distribution $p_T(\x(T))$ becomes a pure Gaussian, such as $\mathcal{N}(\mathbf{0}, \mathbf{I})$ or $\mathcal{N}(\mathbf{0}, T^2 \mathbf{I})$.

The forward process can be reversed to generate data. Starting from a sample $\x_T \sim p_T(\x)$, one can obtain a sample $\x_0 \sim p(\x)$ by solving the corresponding reverse-time SDE~\citep{anderson1982reverse}:
\begin{align}\label{reverse_SDE} 
    d\x = \left(f(\x,t) - g^2(t) \nabla_{\x} \log p_t(\x) \right)dt + g(t)d\bar{\mathbf{b}}, 
\end{align} where $\bar{\mathbf{b}}$ is Brownian motion with time flowing backward from $T$ to $0$. Alternatively, samples can be generated by solving the deterministic probability flow ODE~\citep{songscore}, \begin{align}\label{reverse_ODE} 
d\x = \left(f(\x,t) - \frac{1}{2}g^2(t) \nabla_{\x} \log p_t(\x) \right)dt.
\end{align}
Both reverse methods require the score function $\nabla_{\x} \log p_t(\x)$, typically approximated by a trained score network, $\s_{\theta}(\x,t)$ or $\s_{\theta}(\x,\sigma)$ if parameterized by noise level $\sigma$.

The Elucidating Diffusion Models (EDM) framework \citep{karras2022elucidating, karras2024guiding} parametrizes the diffusion process directly using the noise level $\sigma \in [\sigma_{\min}, \sigma_{\max}]$ rather than an abstract time $t$. This can be achieved by assuming $g(t) = \sqrt{2t}$ and $f(\x,t) = \mathbf{0}$, and using the initial condition $\x_T \sim \mathcal{N}(\mathbf{0}, \sigma_{\max}^2 \mathbf{I})$ for some fixed, sufficiently large $\sigma_{\max} > 0$. The perturbed density can be written as $p_{t}(\x) = p_{\sigma}(\x) = p(\x) \ast \mathcal{N}(\x ; \mathbf{0}, \sigma^2 \mathbf{I})$.

The score network $\s_{\theta}(\x,\sigma)$ is trained via denoising score matching (Eq.~\ref{denoising_score_matching}). Sampling is done by solving the stochastic reverse diffusion SDE (Eq.~\ref{reverse_SDE}) or the deterministic probability flow ODE (Eq.~\ref{reverse_ODE}).

\subsection{Random utility models and density estimation from choice data}\label{section_RUMs} 

In the context of decision theory, the random utility model (RUM) represents the decision maker's stochastic utility $U$ as the sum of a deterministic utility and a stochastic perturbation \citep{train2009discrete},
\begin{equation*}
    U(\x) = u(\x) + W(\x),
\end{equation*}
where $u : \mathcal{X} \rightarrow \mathbb{R}$ is a deterministic \emph{utility function}, $W$ is a stochastic noise process, and the choice space $\mathcal{X}$ is a compact subset of $\mathbb{R}^d$. Given a set $\mathcal{C} \subset \mathcal{X}$ of possible alternatives, the decision maker selects an alternative $\x \in \mathcal{C}$ by solving the noisy utility maximization problem: $\x \sim \arg\max_{\x' \in \mathcal{C}} U(\x')$. Pairwise comparison is the most common form of choice data and corresponds to assuming that the choice set contains only two alternatives, $\mathcal{C} = \{\x, \x'\}$. The decision maker chooses $\x$ from $\mathcal{C}$, denoted by $\x \succ \x'$, if $u(\x) + w(\x) > u(\x') + w(\x')$ for a given realization $w$ of $W$. It is often assumed that $W$ is independent across $\x$, leading to so-called Fechnerian models \citep{becker1963stochastic}, where the choice distribution conditional on $\mathcal{C}$ reduces to $F(u(\x) - u(\x'))$, with $F$ denoting the cumulative distribution function of $W(\x') - W(\x)$. 

Psychophysical experiments suggest that human perception of numerical magnitude follows a logarithmic scale \citep{dehaene2003neural}. Assuming a RUM with utility function $u(\x) = \log p(\x)$, the model’s noise becomes additive in the log-transformed beliefs. In this paper, we consider two RUMs, explicitly including the noise level, as it is crucial for identifying $p(\x)$\footnote{Exact identification of $p(\x)$ requires knowing both the correct noise family and noise level. We consider the noise to be known; see Appendix \ref{appendix_section_ablation} for validation of robustness to this choice.}. First, we study the \emph{generalized Bradley--Terry model} \citep{bradley1952rank} with $W \sim \text{Gumbel}(0, s)$, which induces the conditional choice distribution $F_{\text{Logistic}(0, s)}(u(\x) - u(\x'))$. Second, we consider the \emph{exponential RUM} with $W \sim \text{Exp}(s)$, which yields a heavier-tailed conditional choice distribution $F_{\text{Laplace}(0, 1/s)}(u(\x) - u(\x'))$. 

Under these assumptions, the density estimation task is an instance of expert knowledge elicitation \citep{ohagan:2019,mikkola2023}, and it is closely related to (probabilistic) reward modeling \citep{leike2018scalable,dumoulin2024}. Expert knowledge elicitation infers an expert’s belief as a probability density $p(\x)$ using only queries they can reliably answer, such as requests for specific quantiles or statistics of $p(\x)$ \citep{ohagan:2019,bockting2025expert} or comparisons like here. Recently, \citet{dumoulin2024} reinterpreted reward modeling by referring to the target distribution as the ``implicit preference distribution'' and treating the reward as a probability distribution to be modeled.

\section{Belief density as a tempered marginal winner density}\label{sec_theoretical}

Let $p(\x)$ be the expert's \emph{belief density}. We assume the expert's choices follow a RUM with utility function $u(\x) = \log p(\x)$. They observe two points independently drawn from the \emph{sampling density} $\lambda(\x)$, and the expert chooses one of the points. We denote the probability density of that point by $p_w(\x)$ and refer to it as the \emph{marginal winner density} (MWD). By marginalizing out the unobserved loser $\x'$ in a pairwise comparison where $\x$ is preferred ($\x \succ \x'$), $p_w(\x)$ can be expressed as $2 \lambda(\x) \int F(\log p(\x) - \log p(\x'))\lambda(\x')d\x'$ \citep{mikkola2024prefernetialflows}.

While \citet{mikkola2024prefernetialflows} empirically showed that $p(\x)$ resembles a tempered version of $p_w(\x)$ (i.e., $p(\x) \approx [p_w(\x)]^\tau$ for some constant $\tau$), this relationship was not formally analyzed besides the theoretical limiting case of selecting the winner from infinitely many alternatives. In this section, we establish a more fundamental connection. We demonstrate that under two RUMs---the Bradley--Terry model and the exponential RUM---it is possible to find a tempering field $\tau(\x)$ such that $\nabla \log p(\x) = \tau(\x) \nabla \log  p_w(\x)$ up to reparameterization of the space. This key relationship implies that, in principle, \textbf{$p(\x)$ can be precisely recovered from $p_w(\x)$ using score-based methods if $\tau(\x)$ is known}. This finding motivates leveraging score-matching techniques for estimating the belief density. To analyze such score-based relationships and evaluate approximations, we use the Fisher divergence, which quantifies the difference between two distributions based on their scores:
\begin{equation*}
    F(p,q) = \int_{\mathcal{X}}\norm{\nabla \log p(\x)- \nabla \log  q(\x)}^2 p(\x) d\x.
\end{equation*}

\subsection{Tempering field}
Consider a tempered probability density $p(\x)$ constructed from another density $q(\x)$ using a tempering constant $\tau >0$:
\begin{equation*}
    p(\x) = \frac{q^{\tau}(\x)}{\int_{\mathcal{X}}q^{\tau}(\x')d\x'}.
\end{equation*}
For such densities, the relationship between their scores is given by the product rule as
\begin{equation*}
    \nabla \log p(\x) = \tau \nabla \log q(\x) + \log q(\x)\nabla \tau = \tau \nabla \log q(\x).
\end{equation*}
The score of the tempered density becomes directly proportional to that of the original density, i.e., the two score vectors are collinear. Inspired by this relationship, we define a more general concept. We call a function $\tau : \mathcal{X} \rightarrow (0,\infty)$ a \emph{tempering field} between $p$ and $q$ if their scores satisfy the following relation almost everywhere for $\x \in \mathcal{X}$:
\begin{equation}\label{eq:tempering_field_def}
    \nabla \log p(\x) = \tau(\x) \nabla \log q(\x).
\end{equation}
This implies the scores are collinear, with $\tau(\x)$ as a position-dependent scaling. The tempering field thus describes a localized, score-level tempering.

\subsection{Tempering fields under RUMs}
\label{sec_tempering}

With our theoretical framework in place, we analyze the relationship between the belief density $p$ and the MWD $p_{w}$ in terms of tempering fields for RUM models with utility $\log p$. 
We prove our main results for both the Bradley–Terry model and the exponential RUM, with $W \sim \text{Gumbel}(0, s)$ and $W \sim \text{Exp}(s)$. The treatment of the latter RUM is deferred to Appendix~\ref{appendix_expRUM}.

To facilitate theoretical analysis, with no loss of generality, we assume a uniform sampling distribution $\lambda$ throughout this section, to remove the tilting of MWD $p_w(\x)$. We then address a non-uniform $\lambda$ by reparameterizing the space so that it becomes uniform on a hypercube. The invariance of the RUM under the space reparameterizing is derived in Appendix \ref{appendix_space_reparametrization}. The diffusion model is trained in the transformed space, and the generated samples are mapped back to the original space with the inverse transformation. We use the Rosenblatt transformation, which requires the conditional distribution functions of $\lambda$~\citep{rosenblatt1952remarks}, here assumed to be either known (e.g., when $\lambda$ is Gaussian) or numerically approximated.  Other transformations, e.g. ones based on copulas or normalizing flows trained on samples from $\lambda$ (i.e., the combined data of winners and losers), could be used as well.

Under the following assumptions, a tempering field exists between the belief density and the MWD:

\begin{assumption}\label{assumption1} $\text{supp}(p) \subseteq \text{supp}(\lambda)$. \end{assumption}

\begin{assumption}\label{assumption2} $\lambda$ is a uniform density over $\mathcal{X}$. \end{assumption}

\begin{assumption}\label{assumption3} $p$ is smooth, with $\nabla p \neq \mathbf{0}$ almost everywhere. \end{assumption}

\begin{theorem}\label{theorem_gumbelRUM}
    Assume $W \sim \text{Gumbel}(0,s)$. A tempering field $\tau(\x)$ exists between the belief density $p$ and the MWD $p_w$, and it is given by the formula,
    \begin{align}\label{tempering_field_gumbelRUM}
        &\tau(\x) = s \left(\frac{\int_{\mathcal{X}}\frac{1}{1 + r_s(\x,\x')}d\x'}{\int_{\mathcal{X}}\frac{r_s(\x,\x')}{(1+r_s(\x,\x'))^2}d\x'}\right),
    \end{align}
    where $r_s(\x,\x') := p^{\frac{1}{s}}(\x')p^{-\frac{1}{s}}(\x)$ is the $1/s$-tempered density ratio. 
\end{theorem}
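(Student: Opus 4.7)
The plan is to derive the formula by a direct differentiation of the MWD. Under $W \sim \text{Gumbel}(0,s)$, the difference $W(\x)-W(\x')$ follows $\text{Logistic}(0,s)$, so with utility $u=\log p$ the Fechnerian choice probability reduces to
\begin{equation*}
\mathbb{P}(\x \succ \x') \;=\; \frac{1}{1+\exp\bigl(-(\log p(\x)-\log p(\x'))/s\bigr)} \;=\; \frac{1}{1+r_s(\x,\x')}.
\end{equation*}
By Assumption~\ref{assumption2} the sampling density $\lambda$ is uniform on $\mathcal{X}$, so the prefactor $2\lambda(\x)$ in the expression for $p_w$ given in Section~\ref{sec_theoretical} is constant in $\x$. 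Hence $p_w(\x) \propto I(\x)$ with $I(\x) := \int_{\mathcal{X}} (1+r_s(\x,\x'))^{-1}\,d\x'$, and consequently $\nabla\log p_w(\x) = \nabla I(\x)/I(\x)$.

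The next step is to differentiate under the integral sign. Since $r_s(\x,\x') = p^{1/s}(\x')\,p^{-1/s}(\x)$, the chain rule gives $\nabla_{\x}\, r_s(\x,\x') = -\tfrac{1}{s}\,r_s(\x,\x')\,\nabla\log p(\x)$, and therefore
\begin{equation*}
\nabla_{\x}\,\frac{1}{1+r_s(\x,\x')} \;=\; \frac{1}{s}\cdot\frac{r_s(\x,\x')}{(1+r_s(\x,\x'))^2}\,\nabla\log p(\x).
\end{equation*}
The key observation is that $\nabla\log p(\x)$ does not depend on $\x'$, so it factors out of the integral. Combining with $\nabla\log p_w(\x) = \nabla I(\x)/I(\x)$ yields the collinearity relation
\begin{equation*}
\nabla\log p_w(\x) \;=\; \frac{1}{s}\cdot\frac{\int_{\mathcal{X}}\frac{r_s(\x,\x')}{(1+r_s(\x,\x'))^2}\,d\x'}{\int_{\mathcal{X}}\frac{1}{1+r_s(\x,\x')}\,d\x'}\;\nabla\log p(\x),
\end{equation*}
which, upon solving for $\nabla\log p(\x)$, reproduces Eq.~\ref{eq:tempering_field_def} with $\tau(\x)$ precisely as in Eq.~\ref{tempering_field_gumbelRUM}.

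The main obstacle is bookkeeping rather than conceptual: I need to justify the interchange of gradient and integral, and check that $\tau(\x)$ is finite and positive almost everywhere. For the former, the integrand lies in $[0,1]$ and its $\x$-gradient has norm at most $(1/s)\cdot\tfrac{1}{4}\,\norm{\nabla\log p(\x)}$ because $r_s/(1+r_s)^2 \le 1/4$; by smoothness of $p$ (Assumption~\ref{assumption3}) this bound is locally uniform in $\x$ on the interior of $\text{supp}(p)$, and together with compactness of $\mathcal{X}$ and Assumption~\ref{assumption1} provides an integrable Leibniz dominant. For the latter, both integrals are strictly positive since their integrands are positive functions, and Assumption~\ref{assumption3} ensures $\nabla\log p(\x)\neq\vzero$ a.e., so the scalar in the collinearity relation is uniquely determined a.e.\ and equals $\tau(\x)$ from Eq.~\ref{tempering_field_gumbelRUM}.
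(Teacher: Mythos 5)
Your proposal is correct and follows essentially the same route as the paper's proof: differentiate the uniform-$\lambda$ MWD integral with respect to $\x$ under the integral sign, use that $\nabla_{\x} r_s(\x,\x') = -\tfrac{1}{s} r_s(\x,\x')\nabla\log p(\x)$ factors the belief score out of the integral, and solve the resulting collinearity relation for the scalar, which yields exactly Eq.~\ref{tempering_field_gumbelRUM}. Your explicit forward computation (and the dominated-differentiation justification) is just a cleaner write-up of the condition the paper states tersely by requiring the score difference to vanish; no boundary/Leibniz machinery is needed here since the logistic integrand is smooth, as you correctly exploit.
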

 \begin{proof}
 Our constructive proof derives a scalar field that satisfies the defining relation. Specifically, for any fixed $\x \in \mathcal{X}$, direct manipulations yield a scalar $\tau(\x) > 0$ such that $\nabla_{\x} \log p(\x) - \tau(\x) \nabla_{\x} \log p_{w}(\x) = \mathbf{0}$. See Appendix \ref{appendix_proofs} for the full proof.
\end{proof}

Fig.~\ref{fig2} illustrates the tempering field in Theorem \ref{theorem_gumbelRUM} using $s = \sqrt{6/\pi^2}$ (unit variance noise). There exists a specific invariance relationship between the tempering and the noise scale. Specifically, if $\tau_{p,s}$ denotes the tempering field of RUM under the belief density $p$ and noise level $s>0$, then by the tempering field theorems it is clear that for any exponent $\alpha > 0$: $\tau_{p^{\alpha},s} = \frac{1}{s}\tau_{p^{\alpha s},1}$ and $\tau_{p^{\alpha},s} = s\tau_{p^{\frac{\alpha}{s}},1}$, where the tempering fields are of the exponential RUM and the Bradley--Terry model, respectively.

\begin{figure}[t]
  \centering
  \subfigure[Score fields]{%
    \raisebox{2pt}{\includegraphics[scale=0.263]{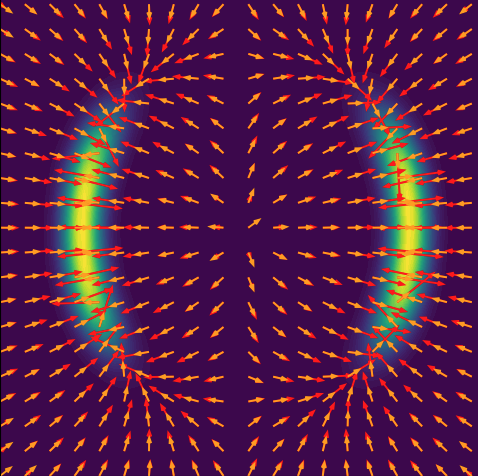}}}
  \subfigure[Tempering fields (estimate and ground-truth)]{%
    \includegraphics[scale=0.35]{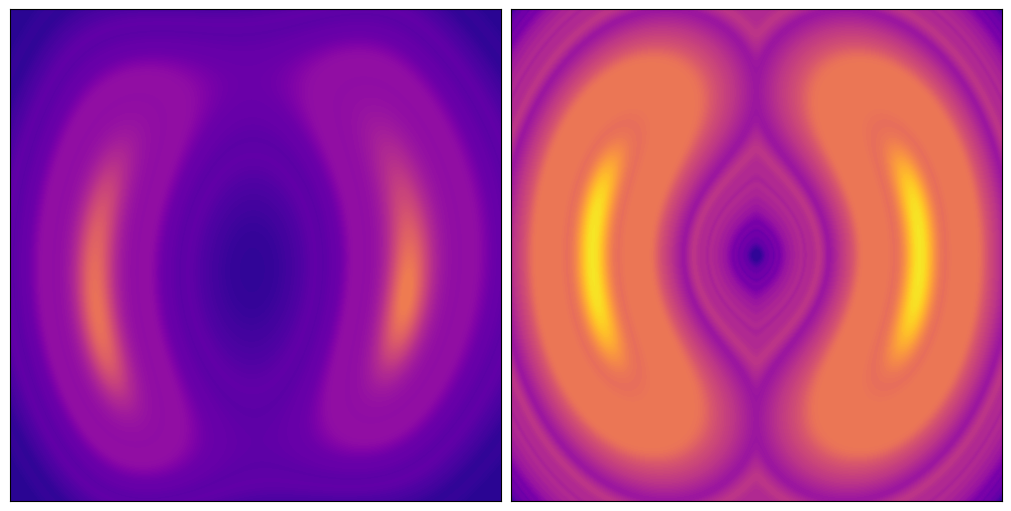}}
  \caption{Illustration of the relationship $\nabla \log p(\x) = \tau(\x) \nabla \log p_w(\x)$ when $p$ is Twomoons2D \citep{stimper2022resampling} and $\lambda$ is uniform. (a) The score of $p$ (red arrows) and the score of $p_w$ (orange arrows) under the Bradley--Terry model, scaled for better visualization. (b) The estimated tempering field $\tau(\x)$ from 200 pairwise comparisons (left, Section \ref{sec_temp_field_est}) and the ground-truth (right, Theorem \ref{theorem_gumbelRUM}). Due to the collinearity of the scores, the red arrows equal the pointwise product of the orange arrows and the tempering field, which can be estimated (with an underestimation in this example).
  } \label{fig2}
\end{figure}

\subsection{On the constant tempering approximation}\label{sec_consant_tempering}

Even though our method directly estimates the full tempering field, our theory also sheds light on methods assuming constant tempering, such as \citet{mikkola2024prefernetialflows}. It allows us to establish three quantities of interest related to approximating $p$ with a constant-tempered version of $q$: (i) the optimal constant tempering $\tau^{\star}>0$ which minimizes $F(p,q^{\tau^{\star}})$, (ii) the approximation error $F(p,q^{\tau})$ for any constant $\tau >0$, and (iii) the approximation error $F(p,q^{\tau^{\star}})$ for the optimal constant tempering. 

\begin{proposition}\label{proposition_optimal_tempering}
Assume that there exists a tempering field $\tau(\x)$ between $p$ and $q$. The optimal tempering constant $\tau^{\star}= \argmin_{\tau>0} F(p,q^{\tau})$ can be written as,
\begin{align}\label{optimal_tempering}
        \tau^{\star} = \mathbb{E}_{X \sim p}\left(\omega(X) \tau(X) \right),
\end{align}
where the stochastic weight $\omega \geq 0$ is given by $\omega(X) = \frac{\norm{\nabla \log q (X) }^2}{\mathbb{E}_{X \sim p}\left(\norm{\nabla \log q(X) }^2\right)}$.
\end{proposition}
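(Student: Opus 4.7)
The plan is to exploit that the Fisher divergence $F(p, q^{\tau})$ is a quadratic function of the scalar $\tau$, so the optimum has a closed form given by a standard least-squares argument; the tempering field hypothesis then converts the optimum into the claimed weighted expectation.

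First I would observe that $\nabla \log q^{\tau}(\x) = \tau \nabla \log q(\x)$ since the normalizing constant $\int q^{\tau}(\x')d\x'$ does not depend on $\x$ and therefore disappears under the gradient. Substituting this into the definition of Fisher divergence gives
\begin{equation*}
F(p, q^{\tau}) = \int_{\gX} \norm{\nabla \log p(\x) - \tau \nabla \log q(\x)}^2 \, p(\x)\, d\x,
\end{equation*}
which expands into a quadratic in $\tau$ of the form $A - 2\tau B + \tau^{2} C$, with
\begin{equation*}
B = \mathbb{E}_{X \sim p}\bigl[\nabla \log p(X) \cdot \nabla \log q(X)\bigr], \qquad C = \mathbb{E}_{X \sim p}\bigl[\norm{\nabla \log q(X)}^2\bigr].
\end{equation*}
Setting the derivative in $\tau$ to zero yields the unique minimizer $\tau^{\star} = B/C$.

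Next I would plug in the tempering-field identity $\nabla \log p(\x) = \tau(\x) \nabla \log q(\x)$ from Eq.~\ref{eq:tempering_field_def}, which gives the pointwise identity $\nabla \log p(\x) \cdot \nabla \log q(\x) = \tau(\x) \norm{\nabla \log q(\x)}^2$. This turns $B$ into $\mathbb{E}_{X \sim p}[\tau(X)\norm{\nabla \log q(X)}^2]$, and dividing by $C$ produces exactly the weighted expectation $\mathbb{E}_{X \sim p}[\omega(X) \tau(X)]$ with $\omega$ as stated. Nonnegativity of $\omega$ is immediate, and positivity of $\tau^{\star}$ follows from $\tau(\x) > 0$ together with Assumption~\ref{assumption3}, which guarantees that $\nabla \log q$ is nonzero on a set of positive $p$-measure and hence $C > 0$.

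There is no real obstacle: everything reduces to differentiating a convex quadratic in a scalar and using the collinearity of the scores. The only point requiring care is ensuring that the minimizer is attained in the open interval $(0,\infty)$ rather than on its boundary; I would justify this by noting that the quadratic is strictly convex (since $C > 0$ under Assumption~\ref{assumption3}) and that the critical point $B/C$ is strictly positive because $\tau(\x)$ is strictly positive on the support of $p$, so the constraint $\tau > 0$ is inactive and the unconstrained minimizer coincides with the constrained one.
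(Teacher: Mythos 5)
Your proof is correct and follows essentially the same route as the paper: treat $F(p,q^{\tau})$ as a quadratic in the scalar $\tau$, set the derivative to zero, and use the collinearity identity $\langle \nabla \log p, \nabla \log q\rangle = \tau(\x)\norm{\nabla \log q(\x)}^2$ to rewrite the minimizer $B/C$ as the weighted expectation. Your extra care about strict convexity and the minimizer lying in $(0,\infty)$ is a welcome addition (just note that the nonvanishing of $\nabla \log q$ follows from Assumption~\ref{assumption3} \emph{via} the tempering-field relation, since the assumption itself concerns $p$), but it does not change the argument.
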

\begin{proof}
By the Leibniz integral rule,
\begin{align*}
        \frac{\partial}{\partial \tau}F(p,q^{\tau})
        &= \int_{\mathcal{X}} 2\left( \tau\norm{\nabla \log q(\x)}^2 - \langle \nabla \log q(\x), \nabla \log p(\x)\rangle \right)p(\x)d\x.
\end{align*}
The divergence is quadratic in $\tau$ and the critical point is the global minimum, and by assumption $\langle \nabla \log q(\x), \nabla \log p(\x)\rangle = \tau(\x) \norm{\nabla \log q(\x)}^2$. Algebraic manipulation gives the result.
\end{proof}
The approximation errors can be quantified in terms of the tempering field.
\begin{proposition}\label{proposition_approx_error}
Let $\tau(\x)$ be a tempering field. For any $\tau>0$ it holds that
\begin{align*}
        F(p,q^{\tau}) = \mathbb{E}_{X \sim p}\left( |\tau-\tau(X)|^2\norm{\nabla \log q(X)}^2 \right).
\end{align*}
Further, when $\tau^{\star} > 0$ is the optimal tempering, we have
\begin{align*}
        &F(p,q^{\tau^{\star}}) = \mathbb{E}_{X \sim p}\left(\norm{\nabla \log q(X)}^2\tau^2(X)\right) - \frac{\left(\mathbb{E}_{X \sim p}\left(\tau(X)\norm{\nabla \log q(X) }^2 \right)\right)^2}{\mathbb{E}_{X \sim p}\left(\norm{\nabla \log q(X) }^2\right)}.
\end{align*}
\end{proposition}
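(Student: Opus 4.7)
The plan is to split the proposition into its two parts and handle each by direct manipulation of the Fisher divergence, using only the tempering-field identity $\nabla \log p(\x) = \tau(\x) \nabla \log q(\x)$ and the explicit formula for $\tau^{\star}$ from Proposition~\ref{proposition_optimal_tempering}. The computations are essentially linear-algebraic, so I expect no conceptual obstacle; the only point needing care is the passage from $q^{\tau}$ (which is only proportional to a density) to its score, ensuring that the normalization constant is handled correctly.

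First I would observe that, because the normalization constant $Z_{\tau} = \int q^{\tau}(\x')d\x'$ is independent of $\x$, we have $\nabla \log[q^{\tau}(\x)/Z_\tau] = \tau \nabla \log q(\x)$. Substituting this together with the tempering-field identity into the definition of $F(p, q^{\tau})$ gives
\begin{equation*}
F(p, q^{\tau}) = \int_{\mathcal{X}} \left\| \tau(\x)\nabla \log q(\x) - \tau \nabla \log q(\x) \right\|^2 p(\x) \, d\x,
\end{equation*}
and factoring the common vector out of the norm yields the first identity. This establishes the expectation formula $F(p,q^{\tau}) = \mathbb{E}_{X \sim p}\left( |\tau-\tau(X)|^2 \|\nabla \log q(X)\|^2 \right)$ claimed in the proposition.

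For the second identity, I would expand the quadratic expression just obtained at $\tau = \tau^{\star}$:
\begin{equation*}
F(p, q^{\tau^{\star}}) = (\tau^{\star})^2 \mathbb{E}_{X\sim p}\!\left(\|\nabla \log q(X)\|^2\right) - 2\tau^{\star}\mathbb{E}_{X\sim p}\!\left(\tau(X)\|\nabla \log q(X)\|^2\right) + \mathbb{E}_{X\sim p}\!\left(\tau^2(X)\|\nabla \log q(X)\|^2\right).
\end{equation*}
Letting $A = \mathbb{E}_{X\sim p}(\tau(X)\|\nabla \log q(X)\|^2)$ and $B = \mathbb{E}_{X\sim p}(\|\nabla \log q(X)\|^2)$, Proposition~\ref{proposition_optimal_tempering} rewrites the optimal tempering as $\tau^{\star} = A/B$. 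Substituting this identity collapses the first two terms into $-A^2/B$, leaving precisely the claimed formula.

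The hard part, such as it is, is purely bookkeeping: making sure the constant inside the squared norm is pulled out correctly and that the plug-in of $\tau^{\star} = A/B$ into the quadratic is carried out without sign errors. Existence of $\tau^{\star}$ as an interior minimizer is already guaranteed by Proposition~\ref{proposition_optimal_tempering}, so no separate optimality verification is needed here.
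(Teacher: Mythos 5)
Your proposal is correct and follows essentially the same route as the paper: the first identity is obtained exactly as in the paper's Lemma~\ref{appendix_lemma_tempering} by substituting the tempering-field relation and factoring out $\nabla \log q$, and the second by plugging the explicit $\tau^{\star} = A/B$ from Proposition~\ref{proposition_optimal_tempering} into the resulting quadratic and simplifying, which is what the paper does in Proposition~\ref{appendix_proposition_approx_error}. Your explicit remark about the $\x$-independence of the normalization constant $Z_\tau$ is a small but welcome clarification that the paper leaves implicit.
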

\begin{proof}
See Appendix \ref{appendix_proofs}.
\end{proof}

\section{Score-based density estimation from pairwise comparisons}\label{sec_method}

Building on Section \ref{sec_theoretical}, we now introduce our score-based density estimator for eliciting the belief density from pairwise comparisons. The method has two components. First, we train a diffusion model on the joint distribution of winners and losers using a masking scheme that ensures its marginal, that is MWD $p_w(\x)$, can also be evaluated. Second, under the Bradley–Terry model, we provide a simple procedure to estimate the tempering field $\tau(\x)$ and use it to de-temper the score-based estimate of the MWD. Details of both steps are explained next. The sampling distribution $\lambda(\x)$ is assumed known, and we reparameterize the space to make it uniform, as explained in Section~\ref{sec_tempering}.

\subsection{Modeling the MWD}\label{sec_score_training}

Our goal is to learn the perturbed score model of the MWD $\nabla \log [p_w(\x) \ast \mathcal{N}(\x ; \mathbf{0}, \sigma^2 \mathbf{I})]$. We want  to utilize all samples, both winners and losers. To do so we simultaneously learn the marginal $p_w(\mathbf{x}) = \int p_{\mathbf{x} \succ \mathbf{x}'}(\mathbf{x}, \mathbf{x}')d\mathbf{x}'$, and the full joint $p_{\mathbf{x} \succ \mathbf{x}'}(\mathbf{x}, \mathbf{x}')$ from the concatenated data of winners and losers. To learn the marginal, during training, half of the time we randomly mask $\mathbf{x}'$ and consider the score only with respect to $\mathbf{x}$. 

We parametrize the score model as $\s_{\theta}(\x, \x', \sigma, \mathsf{joint}, \mathsf{temp})$, where $\mathsf{joint} \in \{0,1\}$ and $\mathsf{temp} \in \{0,1\}$ are conditioning flags: (a) When $\mathsf{joint}=1$, the network takes both $\x$ and $\x'$ as input and models the score of the joint distribution, $\nabla \log p_{\x \succ \x'}(\x,\x')$. (b) When $\mathsf{joint}=0$, the loser $\x'$ is masked (replaced with noise), and the network models the MWD score $\nabla \log p_w(\x)$. The flag $\mathsf{temp}$ then controls whether the output is scaled by the tempering field: setting $\mathsf{temp}=1$ yields an approximation to $\tau(\x)\nabla \log p_w(\x) = \nabla \log p(\x)$.

This parametrization allows us to train a single score network on both winners and losers via denoising score matching (Eq.~\ref{denoising_score_matching}), while still enabling sampling from the belief density. During training, we randomly mask the loser with probability 0.5 (see Appendix~\ref{appendix_joint_marginals} for details). The MWD could also be estimated directly from winners alone, but this would ignore that losers carry information about where winners are less likely to be. We demonstrate the value of modeling the joint distribution empirically in Appendix~\ref{appendix_joint_marginals}, while also confirming that we can accurately marginalize the joint model.

We stay as close as possible to the EDM-style diffusion model \citep{karras2024guiding}. We use the perturbation kernel $p_{\sigma}(\tilde{\mathbf{x}} \mid \mathbf{x}) = \mathcal{N}(\tilde{\mathbf{x}} ; \mathbf{x}, \sigma^2 \mathbf{I})$, which aligns with EDM and defines a forward diffusion process from $\sigma_{\min}$ to $\sigma_{\max}$, where $p_{\sigma_{\min}}(\mathbf{x}) \approx p(\mathbf{x})$ and $p_{\sigma_{\max}}(\mathbf{x}) \approx \mathcal{N}(\mathbf{0}, \sigma_{\max}^2 \mathbf{I})$. A detailed description is provided in Appendix~\ref{appendix_modeling_MWD}. Algorithm~\ref{alg1} summarizes the method.

\subsection{Tempering field estimation}\label{sec_temp_field_est}

The tempering field under the Bradley--Terry model (Eq.~\ref{tempering_field_gumbelRUM}) has a particularly convenient form as it depends only on the \emph{belief density ratio} $r(\x,\x') := p(\x') / p(\x)$ and the RUM noise level $s > 0$. Note that this ratio is different from what the phrase \emph{density ratio} often refers to; this is the ratio of the same density for two inputs, not a ratio of two densities for the same $\x$. It does not depend on the normalizing constant of the belief density and is hence straightforward to estimate via maximum-likelihood estimation (MLE), assuming careful regularization. We train a simple estimator for $r(\x,\x')$ by maximizing the Bradley--Terry model likelihood of the pairwise comparison data. If we parametrize the density ratio (or its logarithm) as a neural network $r_{\theta}$, the parameters $\theta$ can be optimized by minimizing the loss $\mathcal{L}(\theta) \propto \text{Softplus}(\log r_{\theta}(\x,\x') / s)$, where $\x$ and $\x'$ are winner and loser points, respectively. The \text{Softplus} loss comes from the assumption of $W \sim \text{Gumbel}(0,s)$.

We plug the learned $r_{\theta}$ into the integrals in Eq.~\ref{tempering_field_gumbelRUM}, where the integrals are computed using importance sampling with the MWD model acting as the importance sampler. The resulting plug-in Monte Carlo estimator of the tempering field is consistent but biased. Similar biased ratio estimators have been used in self-normalized importance sampling \citep{mcbook} and in every-visit off-policy value estimation in reinforcement learning \citep{sutton1998reinforcement} due to favorable variance properties. See Appendix~\ref{appendix_tempering_field_estimate} for more discussion on the estimator.
For details on evaluating the importance weights, which correspond to the (reciprocal of) density of the MWD diffusion model, see Appendix~\ref{diffusion_model_density}. Algorithm~\ref{alg2} summarizes the estimation procedure. 

\subsection{Belief density sampling}\label{sec_belief_sampling}

Given the perturbed MWD score network $\s_{\theta}(\x,\sigma) \approx \nabla \log [p_w(\x) \ast \mathcal{N}(\x ; \mathbf{0}, \sigma^2 \mathbf{I})]$ and the estimate of the tempering field $\tau(\x)$, we can draw approximate samples from the belief density $p(\x)$ using the score-scaled ALD. Specifically, we iteratively run Eq.~\ref{ALD} with the score $\tau(\x)\s_{\theta}(\x,\sigma)$. The tempering field relation (Eq.~\ref{eq:tempering_field_def}) shows that for $\sigma=0$ this procedure would be equivalent to sampling from $p(\x)$ and ALD is theoretically valid at the small-noise limit \citep{welling2011bayesian}, making it an appealing choice as a sampling algorithm. However, for $\sigma>0$ the algorithm is not exact. We show that it works empirically well (Section~\ref{sec_experiments}), but characterization of the approximation error remains as future work.

\vspace{-1em}
\noindent
\begin{minipage}[t]{.64\textwidth}
    \begin{algorithm}[H]
        \caption{Full algorithm}\label{alg1}
        \begin{algorithmic}
            \STATE {\bfseries require:} choice data $\D=\{[\x_i,\x_i'] \mid\x_i\succ\x_i'\}_{i=1}^n$
            \STATE {\bfseries output:} samples from the belief density \textcolor{gray}{or a trained diffusion model for it it}
            \STATE train $\s_{\theta}(\x,\x',\sigma,\mathsf{joint})$ using DSM (Eq.~\ref{denoising_score_matching}) on $\D$\\ 
            \begin{ALC@g}
                \STATE $50\%$ prob.: set $\mathsf{joint}=0$ and mask $\x'$ with $\mathcal{N}(\mathbf{0},\sigma_t^2 \mathbf{I})$
                \STATE $50\%$ prob.: set $\sigma$ to noise schedule $(\sigma_t)_{t=1}^L$
            \end{ALC@g}
            \STATE initialize $\tau(\x)$ given $\D$ and $\s_{\theta}$
            \STATE sample $\D^{\star}$ using $\tau(\x)$-scaled ALD with the score $\s_{\theta}(\x,\x',(\sigma_t)_{t=1}^L,\mathsf{joint}\mathord{=}0)$
            \STATE \textcolor{gray}{\hrulefill \textit{ optional } \hrulefill}
            \STATE \textcolor{gray}{train $\s_{\theta_{\text{MWD}}}(\x,\sigma,\mathsf{temp})$ using DSM on $\D^{\star}$}
            \STATE \textcolor{gray}{\hrulefill}
           \STATE {\bfseries return:} $\D^{\star}$ \textcolor{gray}{or $\s_{\theta_{\text{MWD}}}(\x,\sigma,\mathsf{temp}\mathord{=}1)$}
        \end{algorithmic}
    \end{algorithm}
\end{minipage}\hfill
\begin{minipage}[t]{.34\textwidth}
    \begin{algorithm}[H]
        \caption{$\tau(\x)$}\label{alg2}
        \begin{algorithmic}
            \STATE {\bfseries require:} noise level s, $\D$, $\s_{\theta}$
            \STATE {\bfseries initialize:}
            \begin{ALC@g}
                \STATE
                train $r_{\theta}(\x,\x') \approx \frac{p(\x')}{p(\x)}$ as MLE of $\D$ (Softplus loss)
                \STATE
                sample $m$ points $\X$ with densities $p_w(\X)$ using $\s_{\theta}$ 
            \end{ALC@g}
            \STATE {\bfseries input:} $\x$
            \STATE  $\mathbf{r} = (r_{\theta}(\x,\X))^{\frac{1}{s}}$
           \STATE {\bfseries return:} $s\left(\frac{\sum_{i=1}^m \frac{1}{1+\mathbf{r}_i} \frac{1}{p_w(\X_i)}}{\sum_{i=1}^m \frac{\mathbf{r}_i}{(1+\mathbf{r}_i)^2} \frac{1}{p_w(\X_i)}}\right)$
        \end{algorithmic}
    \end{algorithm}
\end{minipage}

\section{Experiments}\label{sec_experiments}

We evaluate the method on synthetic data generated from a RUM. We then consider an experiment where a large language model (LLM) serves as a proxy for a human expert, demonstrating the method’s applicability in settings where data does not follow a RUM model. Our experimental setup closely follows that of \citet{mikkola2024prefernetialflows}, with the key distinction that we only query pairwise comparisons, not considering the easier case of ranking multiple candidates. We empirically compare against their flow-based model, using their implementation, as the only previous method for the task. We consider two variants of our \emph{score-based method}: \emph{score}–$\tau(\mathbf{x})$, which uses the full tempering field (Section~\ref{sec_method}), and \emph{score}–$\tau^{\star}$, which uses a constant tempering estimated via Proposition~\ref{proposition_optimal_tempering}. This allows direct quantification of the importance of modeling the whole field.

\paragraph{Setup and evaluation}
For a $d$-dimensional target, we query $1000d$ pairwise comparisons to ensure reliable comparison between the methods but remaining well below the large-sample regime typical for the diffusion model literature. For $d \leq 4$, the sampling distribution $\lambda$ is uniform. For $d > 4$, $\lambda$ is a diagonal Gaussian (Gaussian mixture in Mixturegaussians10D) centered at the target mean, with a variance three times that of the target's.\footnote{For non-uniform sampling distributions, we transform the space with a Rosenblatt transform such that the sampling pdf becomes uniform, with an appropriate Jacobian transformation to the densities.} The simulated expert follows the Bradley--Terry model with utility given by $\log p$, where the belief density $p$ varies in dimensionality, modality, and detail. We set the noise level $s = \sqrt{6/\pi^2}$ (unit variance). As the diffusion model, we adopt an EDM-style architecture with a MLP score network, implemented on top of \citet{karras2024analyzing}. For further details, see Appendix~\ref{appendix_method} and \ref{appendix-experimental-details}. We assess performance qualitatively by visually comparing $2D$ and $1D$ marginals of the target density and the estimate, and quantitatively using two metrics: the Wasserstein distance and the mean marginal total variation distance (MMTV; \citealp{acerbi2020variational}). Results are reported as means and standard deviations over replicate runs.

\paragraph{Experiment 1: Synthetic low dimensional targets with uniform sampling}
We consider low-dimensional synthetic target densities that may exhibit non-trivial geometry or multimodality. The set includes five target distributions, see Appendix~\ref{appendix-target-distributions}. Table~\ref{experimental-results} (top) shows that the score-based method is clearly superior for all targets, with at least $50\%$ (Wasserstein) and $25\%$ (MMTV) reduction of error compared to the flow method. In most cases, using the full tempering field $\tau(\mathbf{x})$ is better than relying on the best constant tempering, but we outperform the flow-based method even when restricted to constant tempering as in their approach. Visual inspection (Fig.~\ref{fig-llmexp-partial} (a-b) and Figs.~\ref{fig_onemoon2d_exp}-\ref{fig_ring2d_exp}) confirms this. The flow method captures the density relatively well but clearly overestimates the low-density regions, whereas our estimate is essentially perfect here.

\paragraph{Experiment 2: Synthetic targets with Gaussian sampling}
For higher-dimensional experiments we replace uniform sampling with more concentrated Gaussian sampling, since otherwise the probability that both $\x$ and $\x'$ fall in low-density regions increases dramatically, making it impossible to learn $p(\x)$ well. The set includes three target distributions, see Appendix~\ref{appendix-target-distributions}. Table~\ref{experimental-results} (bottom) shows that we again outperform the flow-based method in Wasserstein distance but in terms of MMTV the methods are closer. Visually, the score-based method usually gives sharper and better estimates (\emph{e.g.}, compare Fig.~\ref{fig_stargaussian6d_plot} and \ref{fig_stargaussians6d_plot_baseline}), but suffers in terms of the MMTV metric due to occasional too-tight marginals resulting from overestimating the tempering field (\emph{e.g.}, Fig.~\ref{fig_mixturegaussians10d_plot}).

\begin{table}[t]
\begingroup
\centering
\caption{Density estimation from pairwise comparisons: \emph{score}–$\tau^{\star}$ and \emph{score}–$\tau(\mathbf{x})$ denote our methods with constant and varying tempering fields, respectively, and \emph{flow} refers to \citep{mikkola2024prefernetialflows}. Bold indicates the best method, and underline indicates results that are not significantly worse (paired two-sided Wilcoxon signed-rank test, $p > 0.05$).}
\label{experimental-results}
{\small
\vspace{-0.5em}
\begin{tabular}{@{}l
  @{\hspace{3mm}}
  >{\setlength{\tabcolsep}{2pt}}c c c<{\setlength{\tabcolsep}{6pt}}
  @{\hspace{3mm}}
  >{\setlength{\tabcolsep}{2pt}}c c c<{\setlength{\tabcolsep}{0pt}}
@{}}
\toprule
\multirow{2}{*}{$p(\x)$} &
\multicolumn{3}{c}{wasserstein ($\downarrow$)} &
\multicolumn{3}{c}{MMTV ($\downarrow$)} \\
\cmidrule(lr){2-4} \cmidrule(lr){5-7}
 & \emph{flow} & \emph{score}–$\tau^{\star}$ & \emph{score}–$\tau(\x)$
 & \emph{flow} & \emph{score}–$\tau^{\star}$ & \emph{score}–$\tau(\x)$ \\
\midrule
Onemoon2D & 1.37 $\scriptstyle{(\pm 0.03)}$ & 0.44 $\scriptstyle{(\pm 0.13)}$ & \textbf{0.37} $\scriptstyle{(\pm 0.14)}$ & 0.54 $\scriptstyle{(\pm 0.00)}$ & \underline{0.25} $\scriptstyle{(\pm 0.06)}$ & \textbf{0.22} $\scriptstyle{(\pm 0.06)}$ \\
Twomoons2D & 1.29 $\scriptstyle{(\pm 0.06)}$ & 0.54 $\scriptstyle{(\pm 0.14)}$ & \textbf{0.44} $\scriptstyle{(\pm 0.09)}$ & 0.53 $\scriptstyle{(\pm 0.01)}$ & \underline{0.15} $\scriptstyle{(\pm 0.03)}$ & \textbf{0.14} $\scriptstyle{(\pm 0.02)}$ \\
Ring2D & 0.87 $\scriptstyle{(\pm 0.03)}$ & \underline{0.40} $\scriptstyle{(\pm 0.07)}$ & \textbf{0.39} $\scriptstyle{(\pm 0.08)}$ & 0.40 $\scriptstyle{(\pm 0.01)}$ & \underline{0.27} $\scriptstyle{(\pm 0.01)}$ & \textbf{0.26} $\scriptstyle{(\pm 0.01)}$ \\
Gaussian4D & 6.12 $\scriptstyle{(\pm 0.05)}$ & 1.69 $\scriptstyle{(\pm 0.22)}$ & \textbf{1.40} $\scriptstyle{(\pm 0.24)}$ & 0.72 $\scriptstyle{(\pm 0.01)}$ & \textbf{0.41} $\scriptstyle{(\pm 0.05)}$ & \underline{0.44} $\scriptstyle{(\pm 0.08)}$ \\
Mixturegaussians4D & 3.75 $\scriptstyle{(\pm 0.02)}$ & 1.23 $\scriptstyle{(\pm 0.06)}$ & \textbf{1.09} $\scriptstyle{(\pm 0.09)}$ & 0.53 $\scriptstyle{(\pm 0.01)}$ & 0.27 $\scriptstyle{(\pm 0.01)}$ & \textbf{0.22} $\scriptstyle{(\pm 0.02)}$ \\
\hdashline
Stargaussian6D & 2.25 $\scriptstyle{(\pm 0.02)}$ & 1.55 $\scriptstyle{(\pm 0.04)}$ & \textbf{1.28} $\scriptstyle{(\pm 0.04)}$ & 0.19 $\scriptstyle{(\pm 0.00)}$ & 0.18 $\scriptstyle{(\pm 0.02)}$ & \textbf{0.16} $\scriptstyle{(\pm 0.01)}$ \\
Mixturegaussians10D & 1.41 $\scriptstyle{(\pm 0.01)}$ & \textbf{1.10} $\scriptstyle{(\pm 0.12)}$ & 1.33 $\scriptstyle{(\pm 0.11)}$ & 0.19 $\scriptstyle{(\pm 0.00)}$ & \textbf{0.14} $\scriptstyle{(\pm 0.02)}$ & 0.26 $\scriptstyle{(\pm 0.06)}$ \\
Gaussian16D & 5.50 $\scriptstyle{(\pm 0.03)}$ & \underline{5.00} $\scriptstyle{(\pm 0.07)}$ & \textbf{5.00} $\scriptstyle{(\pm 0.06)}$ & 0.16 $\scriptstyle{(\pm 0.00)}$ & \underline{0.13} $\scriptstyle{(\pm 0.00)}$ & \textbf{0.13} $\scriptstyle{(\pm 0.00)}$ \\
\bottomrule
\end{tabular}}
\endgroup
\vspace{-0.4em}
\end{table}

\paragraph{Experiment 3: LLM as a proxy for the expert}
To illustrate the method in a real belief density estimation task without user experiments, we replicate the LLM experiment of \citet{mikkola2024prefernetialflows}, except that we query 220 pairwise comparisons instead of 5-wise rankings. Using the data from \citep{mikkola2024prefernetialflows}, we uniformly sample 220 pairwise comparisons across all eight features and prompt the LLM for belief judgments. The LLM\footnote{For a direct comparison with \citet{mikkola2024prefernetialflows}, we also use Claude 3 Haiku by ~\citet{claude3family}.} acts as a proxy for a human expert, providing its belief about what houses in California are like, restricted to the features available in the California housing dataset \citep{pace1997sparse}. The LLM's belief is inferred solely from the pairwise queries, without providing the LLM any direct access to the data itself.

This experiment leverages the finding that LLMs learn statistical features from the vast training data and can be queried about it~\citep{brown2020,requeima2024llm}. While the aim is to infer the belief density of the LLM---which is unknown---our main goal here is to validate our method, rather than to analyze the beliefs of this particular LLM. We do this by comparing the belief estimate with the empirical data distribution: Similarity between the two suggests the elicitation method yields a reasonable belief estimate, and differences can be interpreted as possible biases the LLM might have. Fig.~\ref{fig-llmexp-partial} (c) shows clear distributional similarities; e.g. the marginals of \emph{AveRooms} and \emph{MedInc} exhibit similar shapes. 
See Appendix~\ref{appendix:llm_exp} for complete results, with Table~\ref{tab-llm-exp} quantifying the accuracy.

\begin{figure}[t]
  \centering
  \subfigure[Score-based]{\includegraphics[height=0.24\textwidth]{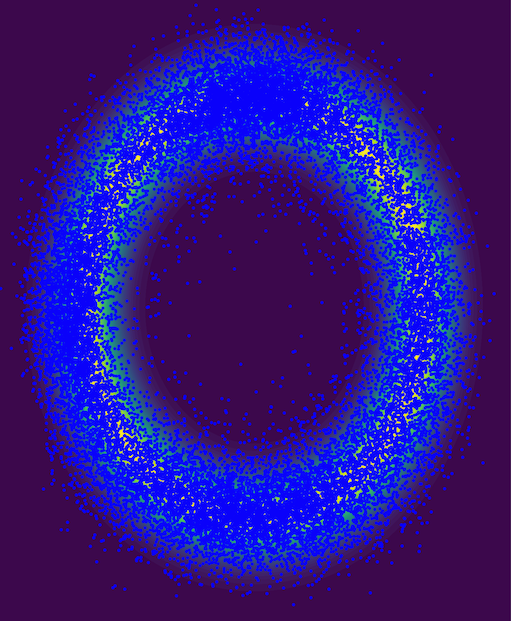}}
  \subfigure[Flow]{\includegraphics[height=0.24\textwidth]{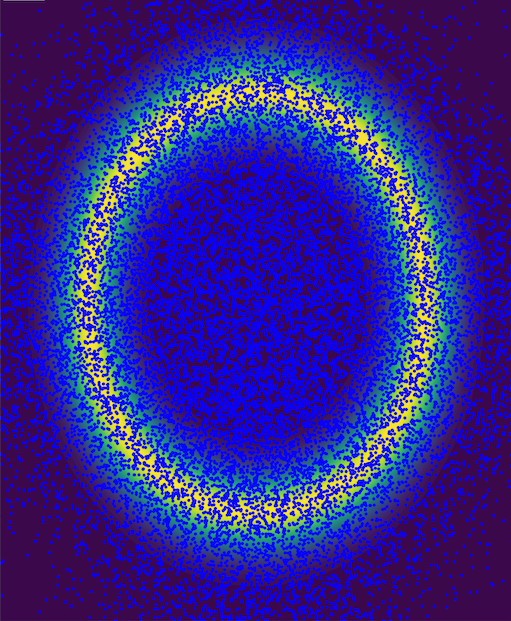}}
  \subfigure[LLM experiment]{\includegraphics[height=0.24\textwidth]{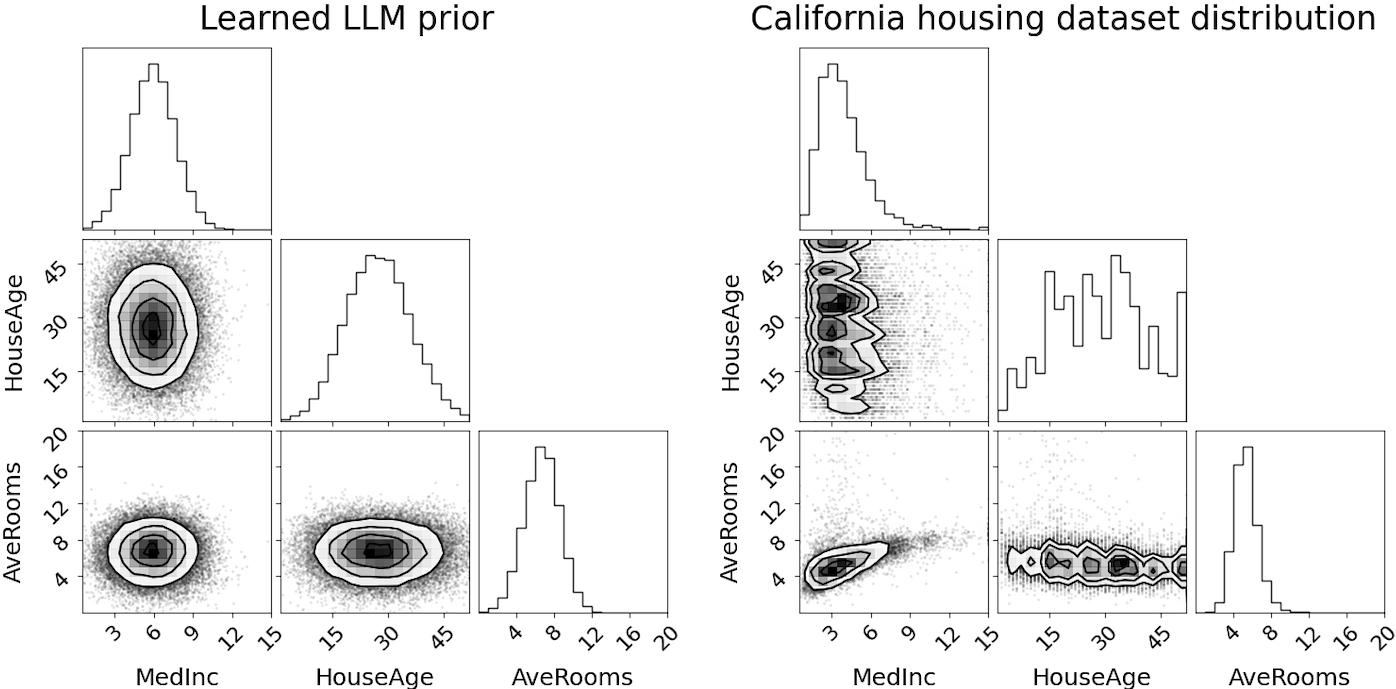}}
  \caption{(a-b) Samples from \emph{score-based} and \emph{flow} estimates of Ring2D, with contours indicating the true density. Ring2D illustrates an extreme case where the score-based method clearly outperforms the flow method: the flow model oversamples the center of the ring, where the MWD also has moderate density, whereas the score-based method can downweight it using the tempering field.
  (c) Cross-plot of the first three variables in the LLM expert elicitation experiment. Full cross-plot 
  and comparison to the flow method are shown in Figs. \ref{fig-llmexp-full} and \ref{fig_llm_baseline}. The score-based method tends to generate Gaussian-like marginals in this extremely limited data setting.}\label{fig-llmexp-partial}
  \vspace{-0.5em}
\end{figure}

\section{Discussion}

We proved the theoretical connection for two common RUMs but we believe it extends to other RUMs as well, although a closed-form tempering field is not guaranteed e.g. for the Thurstone–Mosteller model \citep{Thurstone1927} where the choice probability requires integration.

The difficulty of the belief estimation problem, naturally, depends heavily on the sampling distribution $\lambda(\x)$. For example, when the support of $p(\x)$ is much smaller than that of $\lambda(\x)$, it becomes nearly impossible to obtain sharp estimates of $p(\x)$, and the problem is further exacerbated in high-dimensional spaces. For this reason, we see potential in active learning methods that concentrate sampling in high-density regions of $p(\x)$. In this work, we assume that $\lambda(\x)$ is given as it would be in many applications (e.g. the elicitation protocol in prior elicitation), but for active learning or learning beliefs from public preference data, an additional density estimation step is required to learn $\lambda(\x)$.

We demonstrated that low-dimensional targets can be learned from a few hundreds of pairwise comparisons (Fig.~\ref{fig1}). However, in extremely limited data regimes, say below $100d$ pairwise comparisons, the robustness of our method is not guaranteed without carefully tuning hyperparameters such as those of the diffusion model---a class of models that are notoriously difficult to train in a stable manner \citep[Appendix B.5]{karras2024analyzing}. Stability could likely be improved by adopting best practices from the field \citep{karras2022elucidating} and incorporating recent advances in learning score models from limited data \citep{li2024understanding}. Similarly, the tempering field estimate (Section~\ref{sec_temp_field_est}) depends on the density ratio estimate $r_{\theta}$, which is sensitive to the network’s regularization. Misspecified values of the $\ell_2$ regularization for the network weights $\theta$ or the RUM noise level $s$ will lead to under- or overestimation of the tempering field, due to MLE struggling to determine the global scale and tails.

While our method is theoretically grounded and clearly outperforms the flow-based method of \citet{mikkola2024prefernetialflows} in estimation accuracy, it has some limitations. The flow allows faster sampling, only requiring a single forward pass, and also more efficient and stable evaluation of the density. Our method requires numerically solving the probability-flow ODE for evaluating the density, and it is still open whether diffusion models provide reliable pointwise density estimates \citep{zheng2023improved}.

The connection between the target density $p$ and the MWD $p_w$ may have applications beyond expert knowledge elicitation, such as fine-tuning generative models \citep{wallace2024diffusion} using pairwise data, a perspective explored by \citet{dumoulin2024}. Specifically, when $\lambda(\x)$ represents a pretrained generative model conditioned on a prompt $c$, our theory suggests that training the MWD and tempering field on individual-level data (rather than pooled data) yields a probabilistic reward model $\mathrm{reward}(c,\x)=p(\x|c)$. This conditional tempered MWD captures the distribution of samples (e.g., images) aligned with prompt $c$ from that specific individual’s perspective.

\section{Conclusions}

Despite two decades of active research on learning from preference data, following the pioneering works of \citet{chu2005preference,furnkranz2010preference}, the question of how to learn flexible \emph{densities} from such data has remained elusive. We established the missing theoretical basis by showing how the score of the target density relates to quantities that can be estimated, enabling the use of powerful modern density estimators for this task. 
Our approach demonstrates superior performance over recent flow-based solutions in representing human beliefs.

\subsubsection*{Acknowledgments}
The authors acknowledge the research environment provided by ELLIS Institute Finland. The work was supported by the Research Council of Finland Flagship programme: Finnish Center for Artificial Intelligence FCAI, and additionally by the grants 363317 and 358980. The authors acknowledge support from CSC – IT Center for Science, Finland, for computational resources.

\paragraph{Reproducibility statement}
The source code for reproducing the experiments is available at \href{https://github.com/petrus-mikkola/pairwise2diffusion}{https://github.com/petrus-mikkola/pairwise2diffusion}. Section~\ref{sec_method} describes the method and its implementation, while Appendix~\ref{appendix_method} details the specific components and training settings required to replicate the experiments. Appendix~\ref{appendix_proofs} provides the full proofs of the theoretical results presented in the main text.

\bibliography{refs}
\bibliographystyle{iclr2026_conference}

\clearpage

\appendix
\section*{Appendix}

This appendix provides additional technical material complementing the main paper. Appendix~\ref{appendix_expRUM} presents extended theoretical results for the exponential noise RUM. Appendix~\ref{appendix_proofs} contains the proofs. Appendix~\ref{appendix_method} provides method details and implementation specifications. Appendix~\ref{appendix_space_reparametrization} shows the RUM invariance under space reparameterization to a uniform distribution. Appendix~\ref{appendix-experimental-details} reports experimental details, the runtime breakdown, and additional ablations on RUM model misspecification. Appendix~\ref{appendix_plots} includes plots of the learned belief densities and a description of the LLM experiment. Appendix~\ref{appendix_LLM_usage} describes the use of LLMs in the preparation of this paper.

\section*{List of notation}\label{appendix_notations}
\bgroup
\def\arraystretch{1.5}
\begin{tabular}{p{1in}p{3.25in}}
$\displaystyle \x$ & A vector\\
$\displaystyle \X$ & A matrix\\
$\displaystyle \x_i$ & Element $i$ of vector $\x$ or $i^{th}$ observation\\
$\displaystyle X$ & A vector-valued random variable\\
$\displaystyle W$ & RUM noise, a scalar random variable or a stochastic process\\
$\displaystyle W(\x)$ & A stochastic process at index $\x$ \\
$\displaystyle \succ$ & A binary strict preference relation\\
$\displaystyle \x \succ \x'$ & $\x$ is chosen over $\x'$, where $\x$ is winner point and $\x'$ is loser point\\
$\displaystyle p(\x)$ & A probability density function evaluated at $\x$\\
$\displaystyle p$ & Belief density\\
$\displaystyle \lambda$ & Sampling density\\
$\displaystyle p_w$ & Marginal winner density (MWD)\\
$\displaystyle p_{\x \succ \x'}$ & Joint density of winner-loser pairs encoded in the order of the arguments\\
$\displaystyle u$ & Utility function \\
$\displaystyle \nabla f(\x)$ & The gradient of $f$ with respect to $\x$, and evaluated at $\x$\\
$\displaystyle \s_{\theta}$ & Noise-conditioned score network with parameters $\theta$\\
$\displaystyle \s_{\theta}(\x,\x', \sigma)$ & Noise-conditioned score network evaluated at winner-loser pair $(\x,\x')$ and noise scale $\sigma$ \\
$\displaystyle s$ & RUM noise level, a positive scalar\\
$\displaystyle \tau$ & Tempering field, a scalar field $\tau : \mathcal{X} \rightarrow (0,\infty)$ or a positive scalar (in case of constant field)\\
$\displaystyle \tau(\x)$ & Tempering field evaluated at $\x$, a positive scalar\\
$\displaystyle \tau^{\star}$ & Optimal tempering constant, a positive scalar\\
$\displaystyle \omega$ & Weighting function, a scalar field $\omega : \mathcal{X} \rightarrow [0,\infty) $\\
$\displaystyle \omega(X)$ & Stochastic weight, a scalar random variable\\
$\displaystyle \|\cdot\|$ & $\ell_2$-norm (Euclidean norm)\\
\end{tabular}
\egroup

\clearpage

\section{Tempering field under the exponential noise RUM}\label{appendix_expRUM}

In this appendix, we state the existence and provide a closed-form expression for the tempering field when the expert’s choice model follows the exponential RUM with $W \sim \text{Exp}(s)$.

\renewcommand{\thefigure}{A.\arabic{figure}}
\setcounter{figure}{0}
\renewcommand{\thetable}{A.\arabic{table}}
\setcounter{table}{0}
\renewcommand{\theequation}{A.\arabic{equation}}
\setcounter{equation}{0}

\begin{theorem}\label{theorem_expRUM}
    Assume $W \sim \text{Exp}(s)$. A tempering field $\tau(\x)$ exists between the belief density $p$ and the MWD $p_w$, and it is given by the formula    \begin{align}\label{tempering_field_expRUM}
        &\tau(\x) = \frac{1}{s} \left( \frac{2vol(L_{\x}) +2p^{s}(\x)\int_{U_{\x}} p^{-s}(\x')d\x'}{ p^{s}(\x)\int_{U_{\x}} p^{-s}(\x')d\x' + p^{-s}(\x)\int_{L_{\x}} p^{s}(\x')d\x'} - 1\right),
    \end{align}
    where the sublevel set $L_{\x} = \{\x' \in \mathcal{X} \mid p(\x) \geq p(\x') \}$ and the superlevel set $U_{\x} = \mathcal{X} \setminus L_{\x}$.
\end{theorem}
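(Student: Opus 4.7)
The approach is the same constructive strategy used in Theorem~\ref{theorem_gumbelRUM}: write the MWD $p_w(\x)$ explicitly, take its gradient, and show the result is collinear with $\nabla \log p(\x)$, reading off $\tau(\x)$ as the scalar ratio. Under Assumptions~\ref{assumption1}--\ref{assumption3} with $W \sim \text{Exp}(s)$, the Fechnerian CDF is $F_{\text{Laplace}(0,1/s)}$, which is piecewise exponential. I would split on the sign of $\log p(\x) - \log p(\x')$, i.e.\ on whether $\x' \in L_{\x}$ or $\x' \in U_{\x}$, giving (up to the constant absorbed by the normalizer of $\lambda$)
\begin{equation*}
p_w(\x) \;\propto\; \int_{L_{\x}}\!\Bigl(1 - \tfrac{1}{2} p^{-s}(\x)p^{s}(\x')\Bigr) d\x' \;+\; \int_{U_{\x}}\!\tfrac{1}{2} p^{s}(\x)p^{-s}(\x') d\x'.
\end{equation*}
Abbreviating $A(\x) = \mathrm{vol}(L_{\x})$, $B(\x) = \int_{L_{\x}} p^{s}(\x') d\x'$, and $C(\x) = \int_{U_{\x}} p^{-s}(\x') d\x'$, this becomes $p_w(\x) \propto A(\x) - \tfrac{1}{2} p^{-s}(\x) B(\x) + \tfrac{1}{2} p^{s}(\x) C(\x)$.

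Next I would differentiate in $\x$. The subtle step is that the domains $L_{\x}, U_{\x}$ and the functions $A,B,C$ all depend on $\x$ through the implicit boundary $\{\x' : p(\x') = p(\x)\}$. Here I invoke a Reynolds-transport argument: at the boundary the two integrands equal exactly $\tfrac{1}{2}$, so the inward flux from $L_{\x}$ cancels the outward flux from $U_{\x}$, and all boundary contributions vanish. This is the continuity of the full integrand $\x' \mapsto \mathbb{P}(\x \succ \x')$ across the level set of $p$, and is precisely where Assumption~\ref{assumption3} on smoothness with $\nabla p \neq \mathbf 0$ a.e.\ is used (so the level set has measure zero and a well-defined coarea decomposition). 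I expect this boundary-cancellation step to be the main obstacle requiring care; a direct coarea computation verifies it.

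With the boundary terms gone, only the explicit $\x$-dependence in the integrand remains. Using $\nabla_{\x} p^{\pm s}(\x) = \pm s\, p^{\pm s - 1}(\x) \nabla p(\x)$, the gradient collapses to
\begin{equation*}
\nabla p_w(\x) \;\propto\; \tfrac{s}{2}\, \nabla \log p(\x)\, \bigl[\, p^{-s}(\x) B(\x) + p^{s}(\x) C(\x) \,\bigr],
\end{equation*}
which is manifestly parallel to $\nabla \log p(\x)$, establishing collinearity with positive scalar ratio.

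Finally I would read off $\tau(\x) = \nabla\log p(\x) / \nabla\log p_w(\x)$ (as a scalar by collinearity), obtaining
\begin{equation*}
\tau(\x) \;=\; \frac{2A(\x) - p^{-s}(\x) B(\x) + p^{s}(\x) C(\x)}{s\bigl[p^{-s}(\x) B(\x) + p^{s}(\x) C(\x)\bigr]},
\end{equation*}
and a short algebraic rearrangement (adding and subtracting $p^{s}(\x) C(\x)$ in the numerator and pulling out a factor of $1/s$) recasts this as the stated form in Eq.~\ref{tempering_field_expRUM}. Positivity of $\tau(\x)$ follows because the numerator expression $2A + 2 p^{s}(\x) C$ strictly dominates $p^{s}(\x) C + p^{-s}(\x) B$ whenever $A > 0$ and the level structure is nondegenerate, which is guaranteed under Assumptions~\ref{assumption1}--\ref{assumption3}. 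Full computational details are deferred to the same appendix section as the Bradley--Terry proof.
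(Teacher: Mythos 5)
Your proposal is correct and follows essentially the same route as the paper's proof: split the Laplace CDF over the sublevel/superlevel sets $L_{\x}, U_{\x}$, handle the moving domains with a generalized Leibniz/Reynolds transport argument (the paper computes the boundary velocity $\mathbf{n}\cdot\mathbf{v}$ explicitly, while you shortcut the same cancellation by noting both one-sided integrands equal $\tfrac{1}{2}$ on the level set), differentiate the remaining explicit $\x$-dependence, and read off $\tau(\x)$ as the scalar ratio. Your expression $\tau(\x) = \bigl(2\,\mathrm{vol}(L_{\x}) - p^{-s}(\x)B(\x) + p^{s}(\x)C(\x)\bigr)/\bigl(s\,[\,p^{-s}(\x)B(\x) + p^{s}(\x)C(\x)\,]\bigr)$ is algebraically identical to Eq.~\ref{tempering_field_expRUM}, so the argument is sound.
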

\begin{proof}[Proof Sketch]
    For a fixed $\x \in \mathcal{X}$, after lengthy manipulations, we get $\tau(\x) > 0$ such that
    $\nabla_{\x} \log p(\x) - \tau(\x) \nabla_{\x} \log p_{w}(\x) = \mathbf{0}.$
    To handle the change in integration domains, we apply the generalized Leibniz integral rule \citep{flanders1973}. See Appendix \ref{appendix_proofs} for the full proof.
\end{proof}
Fig.~\ref{fig_expRUM_tempfield} illustrates the tempering fields from Theorem~\ref{theorem_expRUM} and Theorem~\ref{theorem_gumbelRUM}, using $s = \sqrt{6/\pi^2}$ for Bradley--Terry and $s=1$ for exponential RUM, both resulting in unit variance for ease of comparison. The tempering fields are extremely similar, but tempering in high-density regions appears slightly more pronounced in the Bradley–Terry model, due to a lighter-tailed conditional choice distribution.
\begin{figure}[H]
  \centering
    \includegraphics[scale=0.3]{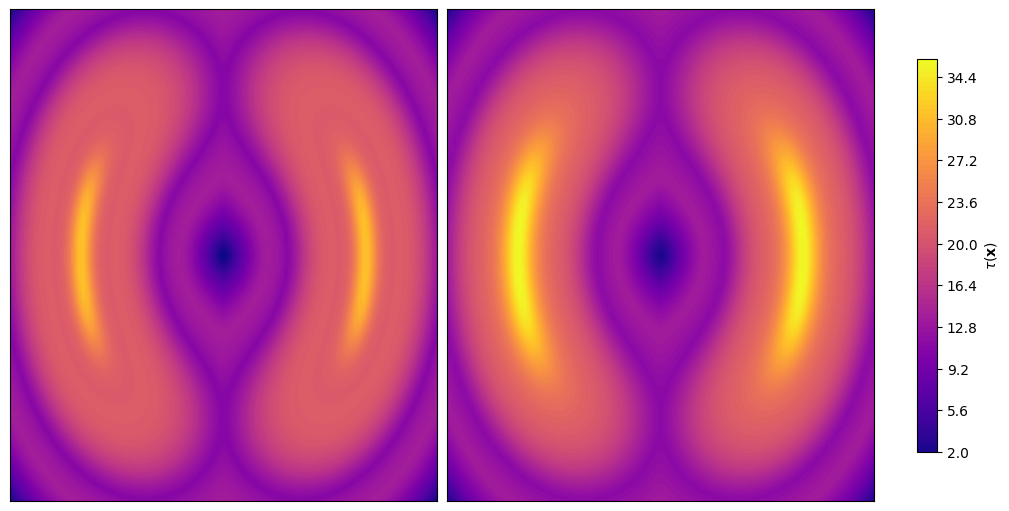}
  \caption{Illustration of the tempering fields under two different RUMs when $p$ is Twomoons2D \citep{stimper2022resampling}. The tempering field $\tau(\x)$ of the exponential RUM (left, Theorem \ref{theorem_expRUM}) and the Bradley--Terry model (right, Theorem \ref{theorem_gumbelRUM}).
  }\label{fig_expRUM_tempfield}
\end{figure}

\section{Proofs}
\label{appendix_proofs}

\renewcommand{\thefigure}{B.\arabic{figure}}
\setcounter{figure}{0}
\renewcommand{\thetable}{B.\arabic{table}}
\setcounter{table}{0}
\renewcommand{\theequation}{B.\arabic{equation}}
\setcounter{equation}{0}

\begin{restatetheorem}{theorem_expRUM}
    Assume $W \sim \text{Exp}(s)$. A tempering field $\tau(\x)$ exists between the belief density $p$ and the MWD $p_w$, and it is given by the formula,
    \begin{align*}
        &\tau(\x) = \frac{1}{s} \left( \frac{2vol(L_{\x}) +2p^{s}(\x)\int_{U_{\x}} p^{-s}(\x')d\x'}{ p^{s}(\x)\int_{U_{\x}} p^{-s}(\x')d\x' + p^{-s}(\x)\int_{L_{\x}} p^{s}(\x')d\x'} - 1\right),
    \end{align*}
    where the sublevel set $L_{\x} = \{\x' \in \mathcal{X} \mid p(\x) \geq p(\x') \}$ and the superlevel set $U_{\x} = \{\x' \in \mathcal{X} \mid p(\x) < p(\x') \}$.
\end{restatetheorem}

\begin{proof}
We want to show that for each $\x \in \mathcal{X}$, there exists a scalar $\tau(\x) > 0$ such that $\nabla \log p(\x) - \tau(\x) \nabla \log p_{w}(\x) = \mathbf{0}$. Our constructive proof determines this scalar through brute-force calculation. Fix a point $\x \in \mathcal{X}$, and denote a constant $\tau(\x) = \tau > 0$.

Under the exponential RUM, the MWD $p_{w}(\x)$ equals to
\begin{align}
        p_{w}(\x) = 2 \lambda(\x) \int_{\mathcal{X}} F_{\text{Laplace}(0,1/s)}\left(\log p(\x) - \log p(\x')\right)\lambda(\x')d\x'.
\end{align}
For uniform $\lambda$, this implies
\begin{align}
         \nabla_{\x} \log p_{w}(\x) &=  \nabla_{\x} \log \int_{\mathcal{X}} F_{\text{Laplace}(0,1/s)}\left(\log p(\x) - \log p(\x')\right)d\x'.
\end{align}

Let $L_{\x},U_{\x} \subset \mathcal{X}$ be the regions $L_{\x} = \{\x' \in \mathcal{X} \mid p(\x) \geq p(\x') \}$ and $U_{\x} = \{\x' \in \mathcal{X} \mid p(\x) < p(\x') \}$. Straightforward manipulations give us,
\begin{align*}
        &\tau \nabla_{\x} \log p_{w}(\x) -  \nabla_{\x} \log p(\x) = \\
        &= \tau \nabla_{\x} \log \bigg(\int_{U_{\x}}\frac{1}{2}p^{s}(\x)p^{-s}(\x')d\x' + \int_{L_{\x}} \left(1-\frac{1}{2}p^{-s}(\x)p^{s}(\x')\right)d\x'\bigg) - \tau \nabla_{\x} \log p^{\frac{1}{\tau}}(\x) \\
        &= \tau \nabla_{\x} \log \frac{vol(L_{\x}) +\frac{1}{2}p^{s}(\x)\int_{U_{\x}} p^{-s}(\x')d\x' - \frac{1}{2}p^{-s}(\x)\int_{L_{\x}} p^{s}(\x')d\x'}{ p^{\frac{1}{\tau}}(\x)}.
\end{align*}
Because $\nabla_{\x} \log f(\x) = \nabla_{\x} f(\x) / f(\x)$, the above vanishes, if and only if,
\begin{align*}
        &\nabla_{\x} \left( p^{-\frac{1}{\tau}}(\x)vol(L_{\x})  + \frac{1}{2}p^{s-\frac{1}{\tau}}(\x)\int_{U_{\x}} p^{-s}(\x')d\x' - \frac{1}{2}p^{-s-\frac{1}{\tau}}(\x)\int_{L_{\x}} p^{s}(\x')d\x' \right) = \mathbf{0}.
\end{align*}
We will apply to LHS the generalized Leibniz integral rule \citep{flanders1973} for each fixed dimension $j \in \{1,...,d\}$ by interpreting $\frac{\partial}{\partial \x_j}$ as differentiation with respect to the time. To justify the generalized Leibniz integral rule, note that the boundaries $\partial L_{\x} = \partial U_{\x}$ are defined by an implicit function $g : \mathcal{X} \rightarrow \mathcal{X}$ whose graph is the set $\{(\x,g(\x))\} = \{(\x,\x') \in \mathcal{X}^2 \mid f(\x,\x') = 0 \}$, where the function $f(\x, \x') := p(\x) - p(\x')$ is continuously differentiable by Assumption~\ref{assumption3}. Moreover, since $\nabla_{\x'} f(\x, \x') = -\nabla p(\x') \neq 0$ almost everywhere, the implicit function theorem implies that the level set $\{(\x, \x') \mid f(\x, \x') = 0\}$ locally defines $\x'$ as a differentiable function of $\x$ almost everywhere. Therefore, $g(\x)$ is continuously differentiable almost everywhere.

For a smooth function $f : \mathcal{X} \rightarrow \mathbb{R}_{+}$ consider,
\begin{align*}
    \frac{\partial}{\partial \x_j} \int_{A_{\x}} f(\x')d\x',
\end{align*}
where $A_{\x} = L_{\x}$ or $A_{\x} = U_{\x}$. Interpret the scalar $\x_j$ as time, and apply the generalized Leibniz integral rule,
\begin{align*}
    \frac{\partial}{\partial \x_j} \int_{A_{\x}} f(\x')d\x' = \int_{A_{\x}} \frac{\partial}{\partial \x_j} f(\x')d\x' +  \int_{\partial A_{\x}} f(\x')(\mathbf{n} \cdot \mathbf{v})dS,
\end{align*}
where $\mathbf{n}$ is the unit normal vector pointing outwards from the boundary $\partial A_{\x}$, $\mathbf{v}$ is the Eulerian velocity of the boundary $\partial A_{\x}$ when $\x_j$ is interpreted as time, and $dS$ is the surface element. The first term in RHS vanishes. For the second term, consider the level set $\{\x' \in \mathcal{X} \mid p(\x)-p(\x') = 0\}$. The normal vector is orthogonal to this level set, which equals to gradient with respect to $\x'$, $\mathbf{n} = \nabla_{\x'}(p(\x)-p(\x')) / \norm{\nabla_{\x'}(p(\x)-p(\x'))}= -\nabla_{\x'} p(\x') / \norm{\nabla_{\x'} p(\x')}$. This corresponds to the normal vector of $L_{\x}$ while the normal vector of $U_{\x}$ is with the opposite sign.

Let us consider $\mathbf{v} = (\frac{\partial \x_1'}{\partial \x_j},...,\frac{\partial \x_N'}{\partial \x_j})$, the velocity of the boundary with respect to $\x_j$. The total derivative of the boundary should not change,
\begin{align*}
    &D_{\x_j} (p(\x)-p(\x')) = 0\\
    &\frac{\partial}{\partial \x_j} (p(\x)-p(\x')) + \sum_{i=1}^d \frac{\partial}{\partial \x_i'} (p(\x)-p(\x'))\frac{\partial \x_i'}{\partial\x_j} = 0\\
    &\frac{\partial}{\partial \x_j} p(\x) - \sum_{i=1}^d \frac{\partial}{\partial \x_i'} p(\x')\frac{\partial \x_i'}{\partial \x_j} = 0.
\end{align*}

Taking the dot product of the constraint with the normal vector gives,
\begin{align*}
    \mathbf{n} \cdot \mathbf{v} &= -\frac{\frac{\partial}{\partial \x_j} p(\x)}{\norm{\nabla_{\x'} p(\x')}}.  
\end{align*}

Since $p(\x')=p(\x)$ on $\partial L_{\x} = \partial U_{\x}$,
\begin{align*}
    \frac{\partial}{\partial \x_j} \int_{L_{\x}} p^s(\x')d\x' &=  -p^s(\x) \frac{\partial}{\partial \x_j} p(\x)\int_{\partial L_{\x}} \frac{1}{\norm{\nabla_{\x'} p(\x')}}dS(\x')\\
    \frac{\partial}{\partial \x_j} \int_{U_{\x}} p^{-s}(\x')d\x' &=  p^{-s}(\x) \frac{\partial}{\partial \x_j} p(\x)\int_{\partial L_{\x}} \frac{1}{\norm{\nabla_{\x'} p(\x')}}dS(\x')\\    
    \frac{\partial}{\partial \x_j} \int_{L_{\x}} d\x' &=  -\frac{\partial}{\partial \x_j} p(\x)\int_{\partial L_{\x}} \frac{1}{\norm{\nabla_{\x'} p(\x')}}dS(\x').
\end{align*}
Together these imply that,
\begin{align*}
        &\left( p^{-\frac{1}{\tau}}(\x) \frac{\partial}{\partial \x_j} \int_{L_{\x}} d\x'  + \frac{1}{2}p^{s-\frac{1}{\tau}}(\x)\frac{\partial}{\partial \x_j} \int_{U_{\x}} p^{-s}(\x')d\x' - \frac{1}{2}p^{-s-\frac{1}{\tau}}(\x)\frac{\partial}{\partial \x_j} \int_{L_{\x}} p^s(\x')d\x' \right) = 0.
\end{align*}
We are left with the following terms that vanish,
\begin{align*}
        &p^{-\frac{1}{\tau}-1}(\x) \nabla_{\x} p(\x) \left( -\frac{1}{\tau}vol(L_{\x})  + \frac{s-\frac{1}{\tau}}{2}p^{s}(\x)\int_{U_{\x}} p^{-s}(\x')d\x' + \frac{s+\frac{1}{\tau}}{2}p^{-s}(\x)\int_{L_{\x}} p^{s}(\x')d\x' \right).
\end{align*}
In order to this hold, the scalar in the parenthesis must vanish,
\begin{align*}
        &\frac{2}{\tau}vol(L_{\x})  + \frac{1}{\tau}p^{s}(\x)\int_{U_{\x}} p^{-s}(\x')d\x' - \frac{1}{\tau}p^{-s}(\x)\int_{L_{\x}} p^{s}(\x')d\x'\\
        &= s p^{s}(\x)\int_{U_{\x}} p^{-s}(\x')d\x' + s p^{-s}(\x)\int_{L_{\x}} p^{s}(\x')d\x'.
\end{align*}
Rearranging the terms give us,
\begin{align*}
        &\tau = \frac{1}{s} \left( \frac{2vol(L_{\x}) +2p^{s}(\x)\int_{U_{\x}} p^{-s}(\x')d\x'}{ p^{s}(\x)\int_{U_{\x}} p^{-s}(\x')d\x' + p^{-s}(\x)\int_{L_{\x}} p^{s}(\x')d\x'} - 1\right).
\end{align*}

\end{proof}

\begin{restatetheorem}{theorem_gumbelRUM}
    Assume $W \sim \text{Gumbel}(0,s)$. A tempering field $\tau(\x)$ exists between the belief density $p$ and the MWD $p_w$, and it is given by the formula,
    \begin{align}
        &\tau(\x) = s \left(\frac{\int_{\mathcal{X}}\frac{1}{1 + r_s(\x,\x')}d\x'}{\int_{\mathcal{X}}\frac{r_s(\x,\x')}{(1+r_s(\x,\x'))^2}d\x'}\right),
    \end{align}
    where $r_s(\x,\x') := p^{\frac{1}{s}}(\x')p^{-\frac{1}{s}}(\x)$ is $1/s$-tempered density ratio.
\end{restatetheorem}

\begin{proof}
Under the Bradley--Terry model, $W \sim \text{Gumbel}(0,s)$,  and the MWD $p_{w}(\x)$ now equals to
\begin{align}
        p_{w}(\x) =  2 \lambda(\x) \int_{\mathcal{X}} F_{\text{Logistic}(0,s)}\left(\log p(\x) - \log p(\x')\right)\lambda(\x')d\x'.
\end{align}
Following same lines of reasoning as in the constructive proof of Theorem \ref{theorem_expRUM}, we fix $\x \in \mathcal{X}$ and aim to find a constant $\tau(\x) = \tau > 0$ solving the tempering field condition. Because $p$ is uniform, the necessary and sufficient condition for the existence of $\tau$ is that it solves the equation,
    \begin{align*}
        p^{-1-\frac{1}{\tau}}(\x)\int_{\mathcal{X}}\frac{\frac{1}{s}p^{-\frac{1}{s}}(\x) p^{\frac{1}{s}}(\x') - \frac{1}{\tau}\left(1 + p^{-\frac{1}{s}}(\x)p^{\frac{1}{s}}(\x')\right)}{\left(1 + p^{-\frac{1}{s}}(\x)p^{\frac{1}{s}}(\x')\right)^2}d\x' = 0.
    \end{align*}
This is equivalent to,
    \begin{align}
        &\tau = s \left(\frac{\int_{\mathcal{X}}\frac{1}{1 + r_s(\x,\x')}d\x'}{\int_{\mathcal{X}}\frac{r_s(\x,\x')}{(1+r_s(\x,\x'))^2}d\x'}\right),
    \end{align}
where for clarity we denote $r_s(\x,\x') := p^{\frac{1}{s}}(\x')p^{-\frac{1}{s}}(\x) = \left(\frac{p(\x')}{p(\x)}\right)^{1/s}$, which is $1/s$-tempered density ratio between density values at compared points $\x'$ and $\x$.
\end{proof}

\begin{restateproposition}{proposition_optimal_tempering}
Assume that there exists a tempering field $\tau(\x)$ between $p$ and $q$. A tempering parameter $\tau^{\star}>0$ defined by,
\begin{align}
        \tau^{\star} = \mathbb{E}_{X \sim p}\left(\omega(X) \tau(X) \right),
\end{align}
where a stochastic weight $\omega \geq 0$ is given by
\begin{align*}
        \omega(X) = \frac{\norm{\nabla \log q (X) }^2}{\mathbb{E}_{X \sim p}\left(\norm{\nabla \log q(X) }^2\right)},
\end{align*}
minimizes the Fisher divergence between $p$ and $q$,
\begin{align}\label{T-Fisher}
        \tau^{\star}= \argmin_{\tau>0} F(p,q^{\tau}).
\end{align}
\end{restateproposition}
\begin{proof}
By the Leibniz formula,
\begin{align*}
        &\frac{\partial}{\partial \tau}\int_{\mathcal{X}} \norm{\nabla_{\x} \log q^{\tau}(\x) -  \nabla_{\x} \log p(\x)}^2 p(\x) d\x\\
        &= \int_{\mathcal{X}} \frac{\partial}{\partial \tau} \norm{\nabla_{\x} \log q^{\tau}(\x) -  \nabla_{\x} \log p(\x)}^2 p(\x) d\x\\
        &= \int_{\mathcal{X}} 2\left( \tau\norm{\nabla_{\x} \log q(\x)}^2 - \langle \nabla_{\x} \log q(\x), \nabla_{\x} \log p(\x)\rangle \right)p(\x)d\x.\\
\end{align*}
Since the Fisher score is quadratic in $\tau$, the critical point corresponds to the global minimum. By the assumption, $\langle \nabla_{\x} \log q(\x), \nabla_{\x} \log p(\x)\rangle = \tau(\x) \norm{\nabla_{\x} \log q(\x)}^2$. Hence,
\begin{align*}
        \tau^{\star} = \frac{\mathbb{E}_{X \sim p}\left(\tau(X)\norm{\nabla \log q(X) }^2 \right)}{\mathbb{E}_{X \sim p}\left(\norm{\nabla \log q(X) }^2\right)}.
\end{align*}
\end{proof}

\begin{lemma}\label{appendix_lemma_tempering}
Let $\tau(\x)$ be a tempering field. For any $\tau>0$ it holds that
\begin{align*}
        F(p,q^{\tau}) = \int_{\mathcal{X}} |\tau-\tau(\x)|^2\norm{\nabla_{\x} \log q(\x)}^2 p(\x) d\x.
\end{align*}
\end{lemma}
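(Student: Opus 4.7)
The plan is a direct substitution using the tempering field identity. First I would expand the Fisher divergence according to its definition, noting that $q^\tau$ has score $\nabla \log q^\tau(\x) = \tau\,\nabla \log q(\x)$ since constant tempering with exponent $\tau$ multiplies the score by $\tau$ (the normalizing constant contributes nothing to the gradient). This gives
\begin{equation*}
F(p,q^\tau) = \int_{\mathcal{X}} \bigl\lVert \nabla \log p(\x) - \tau\,\nabla \log q(\x) \bigr\rVert^2 p(\x)\, d\x.
\end{equation*}

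Next I would invoke the tempering field hypothesis $\nabla \log p(\x) = \tau(\x)\,\nabla \log q(\x)$ (which holds almost everywhere by Eq.~\ref{eq:tempering_field_def}). Substituting this inside the norm and factoring out the common vector field yields
\begin{equation*}
\nabla \log p(\x) - \tau\,\nabla \log q(\x) = \bigl(\tau(\x) - \tau\bigr)\,\nabla \log q(\x),
\end{equation*}
so that the integrand becomes $(\tau(\x)-\tau)^2\,\lVert \nabla \log q(\x)\rVert^2 = |\tau - \tau(\x)|^2\,\lVert \nabla \log q(\x)\rVert^2$. Plugging this back into the integral gives exactly the claimed identity.

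There is no serious obstacle: the lemma is a one-line consequence of the collinearity built into the definition of a tempering field. The only subtlety worth flagging is that the tempering field relation is stated to hold almost everywhere, which is sufficient here because the integral with respect to $p(\x)\,d\x$ is insensitive to null sets, and the measurability and integrability of $|\tau-\tau(\x)|^2\,\lVert\nabla \log q(\x)\rVert^2$ with respect to $p$ is inherited from the (implicit) finiteness of $F(p,q^\tau)$ in the regime of interest. No change-of-variables, no differentiation under the integral, and no dominated-convergence argument are needed.
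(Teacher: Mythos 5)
Your proof is correct and matches the paper's argument: the paper likewise rewrites $\nabla \log q^{\tau}(\x) - \nabla \log p(\x)$ as $(\tau-\tau(\x))\nabla \log q(\x)$ by using the collinearity relation (they phrase it as adding and subtracting $\tau(\x)\nabla\log q(\x)$, you phrase it as direct substitution, which is the same computation), then squares and integrates against $p$. Your extra remark about the almost-everywhere validity being harmless under the integral is fine and consistent with the paper's implicit treatment.
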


\begin{proof}
Since $\tau(\x)$ is a tempering field,
    \begin{align*}
        &\norm{\nabla_{\x} \log q^{\tau}(\x) -  \nabla_{\x} \log p(\x)}\\
        &= \norm{(\tau-\tau(\x))\nabla_{\x} \log q(\x) + (\tau(\x)\nabla_{\x} \log q(\x)-  \nabla_{\x} \log p(\x))}\\
        &= \norm{(\tau-\tau(\x))\nabla_{\x} \log q(\x)}.\\
    \end{align*}
\end{proof}

\begin{proposition}\label{appendix_proposition_approx_error}
Let $\tau^{\star}$ be the optimal tempering and $\tau(\x)$ a tempering field.
\begin{align*}
        &F(p,q^{\tau^{\star}}) = \mathbb{E}\left(\norm{\nabla \log q(X)}^2\tau^2(X)\right) - \frac{\left(\mathbb{E}\left(\tau(X)\norm{\nabla \log q(X) }^2 \right)\right)^2}{\mathbb{E}\left(\norm{\nabla \log q(X) }^2\right)}.
\end{align*}
\end{proposition}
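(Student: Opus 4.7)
The plan is to combine Lemma~\ref{appendix_lemma_tempering} with the explicit formula for $\tau^{\star}$ from Proposition~\ref{proposition_optimal_tempering}. By Lemma~\ref{appendix_lemma_tempering}, for any constant $\tau > 0$ we have
$F(p, q^{\tau}) = \mathbb{E}_{X \sim p}\!\left( (\tau - \tau(X))^2 \|\nabla \log q(X)\|^2 \right)$,
so the first step is simply to substitute $\tau = \tau^{\star}$ and expand the square inside the expectation.

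Next, I would expand
$(\tau^{\star} - \tau(X))^2 = (\tau^{\star})^2 - 2\tau^{\star}\tau(X) + \tau^2(X)$
and push the constants $\tau^{\star}$ and $(\tau^{\star})^2$ outside the expectation, writing the result as a sum of three terms involving $\mathbb{E}(\|\nabla \log q(X)\|^2)$, $\mathbb{E}(\tau(X)\|\nabla \log q(X)\|^2)$, and $\mathbb{E}(\tau^2(X)\|\nabla \log q(X)\|^2)$. The key identity supplied by Proposition~\ref{proposition_optimal_tempering} is
$\tau^{\star} \cdot \mathbb{E}(\|\nabla \log q(X)\|^2) = \mathbb{E}(\tau(X)\|\nabla \log q(X)\|^2)$,
which lets the cross term $-2\tau^{\star}\mathbb{E}(\tau(X)\|\nabla \log q(X)\|^2)$ and the square term $(\tau^{\star})^2 \mathbb{E}(\|\nabla \log q(X)\|^2)$ both be rewritten in terms of a single ratio, namely $(\mathbb{E}(\tau(X)\|\nabla \log q(X)\|^2))^2 / \mathbb{E}(\|\nabla \log q(X)\|^2)$.

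Collecting the three rewritten terms gives
$F(p, q^{\tau^{\star}}) = \mathbb{E}(\tau^2(X)\|\nabla \log q(X)\|^2) - (\mathbb{E}(\tau(X)\|\nabla \log q(X)\|^2))^2 / \mathbb{E}(\|\nabla \log q(X)\|^2)$,
which is the claimed formula. There is no real obstacle here: the argument is essentially the one-line identity that the minimum value of a quadratic $a\tau^2 - 2b\tau + c$ in $\tau$ equals $c - b^2/a$, specialised with $a = \mathbb{E}(\|\nabla \log q(X)\|^2)$, $b = \mathbb{E}(\tau(X)\|\nabla \log q(X)\|^2)$, and $c = \mathbb{E}(\tau^2(X)\|\nabla \log q(X)\|^2)$. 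The only mild care needed is to ensure $\mathbb{E}(\|\nabla \log q(X)\|^2) > 0$ so that $\tau^{\star}$ is well-defined, but this is already implicit in the existence hypothesis of the tempering field together with the non-degeneracy of $p$ from Assumption~\ref{assumption3}.
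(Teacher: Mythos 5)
Your proposal is correct and follows essentially the same route as the paper: substituting the explicit $\tau^{\star}$ from Proposition~\ref{proposition_optimal_tempering} into the identity of Lemma~\ref{appendix_lemma_tempering}, expanding the square, and collecting terms is exactly the paper's computation. The observation that this is the minimum-of-a-quadratic identity $c - b^2/a$ is a nice compact way to phrase the same algebra.
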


\begin{proof}
By Proposition \ref{proposition_optimal_tempering} and Lemma \ref{appendix_lemma_tempering},
    \begin{align*}
        &\int_{\mathcal{X}} \norm{\nabla_{\x} \log q^{\tau^{\star}}(\x) -  \nabla_{\x} \log p(\x)}^2 p(\x) d\x\\
        &= \mathbb{E} \left(|\tau^{\star}-\tau(X)|^2\norm{\nabla \log q(X)}^2\right)\\
        &= \mathbb{E} \left(\left(\frac{\mathbb{E}\left(\tau(X')\norm{\nabla \log q(X') }^2 \right)}{\mathbb{E}\left(\norm{\nabla \log q(X') }^2\right)}-\tau(X)\right)^2\norm{\nabla \log q(X)}^2\right)\\
        &= \mathbb{E} \left(\left(\frac{\norm{\nabla \log q(X)}\mathbb{E}\left(\tau(X')\norm{\nabla \log q(X') }^2 \right)}{\mathbb{E}\left(\norm{\nabla \log q(X') }^2\right)}-\norm{\nabla \log q(X)}\tau(X)\right)^2\right)\\
        &= \frac{\left(\mathbb{E}\left(\tau(X)\norm{\nabla \log q(X) }^2 \right)\right)^2}{\mathbb{E}\left(\norm{\nabla \log q(X) }^2\right)} - 2\frac{\left(\mathbb{E}\left(\tau(X)\norm{\nabla \log q(X) }^2 \right)\right)^2}{\mathbb{E}\left(\norm{\nabla \log q(X) }^2\right)} + \mathbb{E}\left(\norm{\nabla \log q(X)}^2\tau^2(X)\right)\\
        &=\mathbb{E}\left(\norm{\nabla \log q(X)}^2\tau^2(X)\right) - \frac{\left(\mathbb{E}\left(\tau(X)\norm{\nabla \log q(X) }^2 \right)\right)^2}{\mathbb{E}\left(\norm{\nabla \log q(X) }^2\right)}.
\end{align*}
\end{proof}

\begin{restateproposition}{proposition_approx_error}
Let $\tau(\x)$ be a tempering field. For any $\tau>0$ it holds that
\begin{align}\label{error_tempering_method1}
        F(p,q^{\tau}) = \mathbb{E}_{X \sim p}\left( |\tau-\tau(X)|^2\norm{\nabla \log q(X)}^2 \right).
\end{align}
Further, when $\tau^{\star} > 0$ is the optimal tempering
\begin{align}\label{error_tempering_method2}
        &F(p,q^{\tau^{\star}}) = \mathbb{E}_{X \sim p}\left(\norm{\nabla \log q(X)}^2\tau^2(X)\right) - \frac{\left(\mathbb{E}_{X \sim p}\left(\tau(X)\norm{\nabla \log q(X) }^2 \right)\right)^2}{\mathbb{E}_{X \sim p}\left(\norm{\nabla \log q(X) }^2\right)}.
\end{align}
\end{restateproposition}
\begin{proof}
    This is a combined result of Lemma \ref{appendix_lemma_tempering} and Proposition \ref{appendix_proposition_approx_error}.
\end{proof}

\begin{corollary}\label{corollay_relationship}
Assume that the expert choice model follows the Bradley--Terry model or the exponential RUM. The scores of the belief and the MWD are collinear. That is, there exists a scalar-valued function $\tau(\x)$ such that,
\begin{align*}
    \nabla \log p(\x) = \tau(\x) \nabla \log p_{w}(\x).
\end{align*}
\end{corollary}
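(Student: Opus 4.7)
The plan is to observe that this corollary is a direct repackaging of the two preceding theorems: Theorem~\ref{theorem_gumbelRUM} for the Bradley--Terry case and Theorem~\ref{theorem_expRUM} for the exponential RUM. Both theorems already establish the existence of a scalar-valued function $\tau(\x)$ satisfying the defining relation of Eq.~\ref{eq:tempering_field_def}, which is exactly the collinearity claim of the corollary. So my proof would simply split into two cases according to which RUM is assumed and invoke the corresponding theorem.

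In the Bradley--Terry case, where $W\sim\text{Gumbel}(0,s)$, I would cite Theorem~\ref{theorem_gumbelRUM} and point to Eq.~\ref{tempering_field_gumbelRUM} for the explicit formula of $\tau(\x)$ in terms of the tempered density ratio $r_s(\x,\x')$. In the exponential RUM case, where $W\sim\text{Exp}(s)$, I would cite Theorem~\ref{theorem_expRUM} and point to Eq.~\ref{tempering_field_expRUM} for the corresponding formula involving the sublevel and superlevel sets $L_\x, U_\x$. In both cases the value returned by the formula is a scalar depending only on $\x$, so collinearity of $\nabla\log p(\x)$ and $\nabla\log p_w(\x)$ is immediate.

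The only substantive thing worth verifying — and this is where I would be slightly careful — is that the scalar $\tau(\x)$ is actually well-defined (positive and finite) at every $\x$ in the interior of $\text{supp}(p)$. This requires the denominators in Eqs.~\ref{tempering_field_gumbelRUM} and~\ref{tempering_field_expRUM} to be strictly positive, which follows from Assumptions~\ref{assumption1}--\ref{assumption3}: the integrands are positive, and the smoothness together with $\nabla p\neq\mathbf{0}$ almost everywhere guarantees that the level-set boundaries $\partial L_\x$ behave nicely and that $L_\x$ and $U_\x$ have positive measure for generic $\x$. Beyond this routine bookkeeping there is no genuine obstacle, since the hard analytical work — the brute-force score manipulation and the application of the generalized Leibniz rule to moving boundaries — is already discharged inside the two underlying theorems.
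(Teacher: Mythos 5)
Your proposal is correct and follows the same route as the paper, which proves the corollary simply by invoking the definition of a tempering field (Eq.~\ref{eq:tempering_field_def}) together with Theorems~\ref{theorem_gumbelRUM} and~\ref{theorem_expRUM}. The extra care you take about the positivity of the denominators is a reasonable (and harmless) addition, but no new argument beyond citing the two theorems is required.
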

\begin{proof}
    The result follows directly from Definition~\ref{eq:tempering_field_def} and  Theorems~\ref{theorem_gumbelRUM} and \ref{theorem_expRUM}.
\end{proof}

\section{Method}\label{appendix_method}

\renewcommand{\thefigure}{C.\arabic{figure}}
\setcounter{figure}{0}
\renewcommand{\thetable}{C.\arabic{table}}
\setcounter{table}{0}
\renewcommand{\theequation}{C.\arabic{equation}}
\setcounter{equation}{0}
\renewcommand{\thealgorithm}{C.\arabic{algorithm}}
\setcounter{algorithm}{0}

\subsection{Training joint and marginals distributions}\label{appendix_joint_marginals}

Recent works discuss in detail how diffusion models can be used to learn between joint and arbitrary conditional distributions, whereas modeling the marginals is not always straightforward \citep{gloeckler2024all}. We adopt a simplified approach to estimate the marginal score function by leveraging a corruption-based marginalization strategy. 

To model simultaneously both the joint distribution $p_{\x \succ \x'}(\x, \x')$ and the marginal distribution $p_w(\x)$, we introduce a binary conditioning variable $\mathsf{joint} \in \{\texttt{true}, \texttt{false}\}$ into the score model. During training, we randomly set $\mathsf{joint} = \texttt{false}$ with 50\% probability, and in this case, we mask the input $\x_t'$ by replacing it with Gaussian noise $\mathcal{N}(\mathbf{0}, \sigma_t^2 \mathbf{I})$, where $\sigma_t$ is the current noise level. We then compute the denoising score matching loss only over the winner dimensions $\x$ (i.e., the first $d$ components of $\nabla \log p_{\x \succ \x'}(\x,\x')$). When $\mathsf{joint} = \texttt{true}$, we train the model to predict the full joint score over both $\x$ and $\x'$.

At sampling time, to generate samples from the marginal distribution $p_w(\x)$ using ALD, we similarly set $\mathsf{joint} = \texttt{false}$ and replace $\x_t'$ with Gaussian noise $\mathcal{N}(\mathbf{0}, \sigma_t^2 \mathbf{I})$ at each iteration of ALD. Fig.~\ref{fig_onlywinners_twomoon} demonstrates the benefit of training the score model on the full joint data while using the proposed marginalization method to model the marginal score.

\begin{figure}[t]
  \centering
   \subfigure[Tempered MWD, w/o joint training]{\includegraphics[scale=0.55]{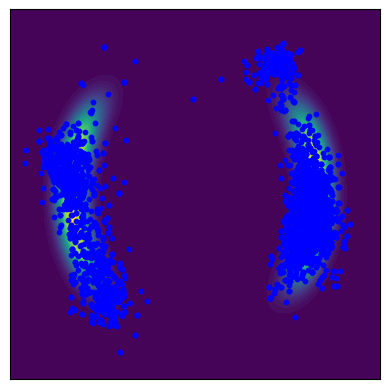}}
   \subfigure[Tempered MWD, w/ joint training]{\includegraphics[scale=0.55]{Figures/figure1_plotd.png}}
    \caption{Replication of Fig.~\ref{fig1} using (a) only winner samples, with the score model trained only for the MWD $p_w(\x)$. The quality of the final density estimate, i.e., the `tempered' MWD, is clearly inferior compared to (b) training on the full joint $p_{\x \succ \x'}(\x, \x')$ using both winners and losers.}\label{fig_onlywinners_twomoon}
\end{figure}

\paragraph{Does the diffusion model learn to marginalize?}
To empirically study how well our approach for the diffusion marginalization is able to learn to marginalize, we analyze the difference between the marginal samples from the joint model, and the samples from the marginal model. Specifically, Table~\ref{results-marginalization} reports the Wasserstein distance between the samples from (i) the true winner marginal of the joint model: $\x$ where $(\x,\x') \sim p_{\x \succ \x'}(\x, \x')$ using the score model $\s_{\theta}(\x,\x',\sigma,\mathsf{joint}=1,\mathsf{temp}=0)$, and (ii) the winner marginal of the joint model: $\x$ where $\x \sim p_{w}(\x)$ using the score model $\s_{\theta}(\x,\x',\sigma,\mathsf{joint}=0,\mathsf{temp}=0)$. To quantify the similarity, we can contrast the Wasserstein distance between these two estimates to the one we have between the target and the score-based estimate (reported in the fourth column in Table~\ref{experimental-results}). This relative difference is typically below $20\%$, indicating that the marginalization method performs well.

\begin{table}
\caption{{Wasserstein distance between the winner marginal samples using $\s_{\theta}(\x,\x',\sigma,\mathsf{joint}=0,\mathsf{temp}=0)$ and $\s_{\theta}(\x,\x',\sigma,\mathsf{joint}=1,\mathsf{temp}=0)$}. The last column reports the distance as a percentage relative to 
the fourth column of Table~\ref{experimental-results}.}
\label{results-marginalization}
\centering
\begin{tabular}{@{}l|c|c@{}}
\toprule
$p(\x)$ & wasserstein & relative proportion\\
\midrule
Onemoon2D & 0.085 ($\pm$ 0.009) & 23\% \\
Ring2D & 0.077 ($\pm$ 0.012) & 18\% \\
Twomoons2D & 0.072 ($\pm$ 0.010) & 18\% \\
Mixturegaussians4D & 0.071 ($\pm$ 0.002) & 5\% \\
Onegaussian4D & 0.069 ($\pm$ 0.001) & 6\% \\
Stargaussian6D & 0.156 ($\pm$ 0.001) & 12\%  \\
Mixturegaussians10D & 0.366 ($\pm$ 0.001) & 28\% \\
Onegaussian16D & 0.671 ($\pm$ 0.001) & 13\% \\
\bottomrule
\end{tabular}
\end{table}

\subsection{Details on modeling the `tempered' MWD}\label{appendix_modeling_MWD}

To learn the MWD $p_w(\x)$, having only access to samples from the joint distribution of winners and losers $p_{\x \succ \x'}(\x,\x')$, we propose learning the full joint and the first marginal to capture the preference relationships in the data while enabling sampling from the tempered marginal via score-scaled ALD. To that end, we parametrize the score model $\s_{\theta}(\x,\x',\sigma,\mathsf{joint},\mathsf{temp})$ such that $(\x,\x') \mapsto \s_{\theta}(\x,\x',\sigma_{\min},\mathsf{joint}\mathord{=}1,\mathsf{temp}\mathord{=}0)$ models the score of the joint distribution of winners and losers, i.e., $\nabla \log p_{\x \succ \x'}(\x,\x')$, and $(\x,\mathsf{temp}) \mapsto \s_{\theta}(\x,\emptyset,\sigma_{\min},\mathsf{joint}\mathord{=}0,\mathsf{temp})$ models the score of the MWD and its `tempered' version, i.e., $\nabla \log p_w(\x)$ and $\tau(\x)\nabla \log p_w(\x)$, respectively. This allows training the score network with both winners and losers, while still enabling sampling from the belief density via the approximation $\s_{\theta}(\x,\emptyset,\sigma_{\min},\mathsf{joint}\mathord{=}0,\mathsf{temp}\mathord{=}1) \approx \tau(\x)\nabla \log p_w(\x) = \nabla \log p(\x)$ by Corollary~\ref{corollay_relationship}. 
Fig.~\ref{fig_onlywinners_twomoon} validates that the joint score model is superior to directly learning the marginal from only the winner samples.

We implement the method by training a score model through the denoising score matching \eqref{denoising_score_matching} on the concatenation of winners and losers, with random masking of losers during training (for more details, see Appendix \ref{appendix_joint_marginals}). Technically speaking, to enable sampling via reverse diffusion \emph{and} ALD, we use a noise distribution $p_{\text{train}}(\sigma)$ during training, defined as a mixture of a Dirac delta on a cosine noise schedule and a $\mathrm{LogNormal}(P_{\text{mean}}, P_{\text{std}}^2)$, with mixture weight $\phi = 0.5$ assigned to the Dirac delta component. We stay as close as possible to the EDM-style diffusion model \citep{karras2024guiding}. Specifically, we use the perturbation kernel $p_{\sigma}(\tilde{\x}|\x) = \mathcal{N}(\tilde{\x} ; \x,\sigma^2 \mathbf{I})$, which aligns with EDM and defines a forward diffusion process from $\sigma_{\min}$ to $\sigma_{\max}$, where $p_{\sigma_{\min}}(\x) \approx p(\x)$ and $p_{\sigma_{\max}}(\x) \approx \mathcal{N}(\mathbf{0},\sigma_{\max}^2 \mathbf{I})$. After training $\s_{\theta}(\x,\x',\sigma,\mathsf{joint},0)$ on perturbed winners and losers, we sample from the belief density using the score-scaled ALD. We can either stop here, or optionally train the tempered marginal score network $\s_{\theta_{\text{MWD}}}(\x,\sigma,\mathsf{temp})$ whose weights can be initialized to that of $\s_{\theta}$, through denoising score matching using the sampled data. Finally, we use the loss weighting $\ell(\sigma) = \sigma^2$. Algorithms \ref{alg1}--\ref{alg2} summarize the method.

\subsection{Score model}\label{appendix_score_model}

We follow as closely as possible the EDM2 specifications used in the 2D toy experiment in \citep{karras2024guiding}. For both the joint score network and the tempered MWD score network, we use an MLP with one input layer and four hidden layers, SiLU activation functions \citep{hendrycks2016gaussian} are applied after each hidden layer, and implemented using the magnitude-preserving primitives from EDM2 \citep{karras2024analyzing}. In the joint score network, the input is a $(2d+3)$-dimensional vector $(x, x', \sigma, \mathsf{joint}, \mathsf{temp})$, and the output of each hidden layer has $h$ features, where $h \in \{32, 64, 96, 128\}$ depending on the experiment dimensionality. In the MWD score network, the input is a $(d+3)$-dimensional vector $(x, \sigma, 0, \mathsf{temp})$. The binary variables $\mathsf{joint}$ and $\mathsf{temp}$ are linearly embedded into an $h/4$-dimensional space. Further, a simple residual connection is applied to the embedded variables through all hidden layers. Otherwise, we use the same preconditioning for the score network as described in EDM \citep{karras2022elucidating}.

\subsection{Belief density ratio model}\label{appendix_density_ratio_model}

We parametrize the belief density ratio $r_{\theta}(\x,\x') \approx p(\x')/p(\x)$ via parameterizing the unnormalized log density $f_{\theta}(\x) \approx \log p(\x) + constant$ as an MLP with three hidden layers with SiLu activations, and one output layer, such that $\log r_{\theta}(\x,\x') = f_{\theta}(\x') - f_{\theta}(\x)$. The number of hidden units is tied to that of the score model (Section~\ref{appendix_score_model}). Regularization of the weights $\theta$ is important for obtaining sensible results. To this end, we apply adaptive $\ell_2$-regularization using the \emph{Adam} optimizer~\citep{kingma2014adam} with weight decay. In contrast, standard $\ell_2$-regularization, corresponding to \emph{AdamW}~\citep{loshchilov2017decoupled}, yielded slightly inferior empirical performance. We set the weight decay to $10^{-3}$, except in the small data $n=100d$ experiments, where we use a higher value of $3 \times 10^{-3}$.

\subsection{Tempering field estimate}\label{appendix_tempering_field_estimate}

We estimate the integral ratio in Eq.~\ref{tempering_field_gumbelRUM} by approximating both the numerator and the denominator using Monte Carlo estimates, using the MWD $p_w(\x)$ as importance sampler,
\begin{align}\label{tau_estimate_biased}
    &\tau(\x) \approx s \left(\frac{\sum \frac{1}{p_w(\x_i)}\frac{1}{1 + r_s(\x,\x_i)}}{\sum \frac{1}{p_w(\x_i)} \frac{r_s(\x,\x_i)}{(1+r_s(\x,\x_i))^2}}\right),
\end{align}
where the sums are over the importance samples $\x_i \sim p_w$. This Monte Carlo ratio estimator is biased, but consistent.

For the $d$-dimensional target, we use $2000d$ importance samples to estimate the integrals in the tempering field. The importance weights are computed using the probability-flow ODE of the MWD diffusion model (see Appendix~\ref{diffusion_model_density} for details). The estimated tempering field $\tau(\x)$ is clipped such that $1 \leq \tau(\x) \leq Q_{\tau}(0.99)$, where $Q_{\tau}(0.99)$ denotes the $99\%$ quantile of the estimated tempering field values (for visualization, we use the $99.9\%$ quantile). The lower bound follows directly from the theory (i.e., from formula~\ref{tempering_field_gumbelRUM}), while the upper bound is introduced for numerical stability to remove outliers.

\subsection{Score-scaled ALD}

Score-scaled ALD uses the scaled score $\tau(\x) \nabla \log p_w(\x)$, where $\nabla \log p_w(\x)$ is replaced by our estimated score. While $\tau(\x)$ is not the ALD step size $\epsilon > 0$, it is clear that $\tau(\x)$ influences the ALD update in a manner similar to $\epsilon$. To ensure convergence of score-scaled ALD, at each ALD step we use the step size $\epsilon = \frac{\epsilon_{\text{base}}}{\tau(\x)}\frac{\sigma^{2}}{\sigma_{\max}^{2}}$, where $\epsilon_{\text{base}}$ is the base step size to be specified. The required number of iterations $T$ should be chosen with respect to $\epsilon_{\text{base}}$. In our experiments, we use $L = 50$, $T=40$, and $\epsilon_{\text{base}} = 0.15$, except in the $2D$-experiments, where we use $\epsilon_{\text{base}} = 7.0$ with $L = 15$ and $T=40$. In the $2D$-experiments, we keep the original domain scale ($[-3,3]^d$) and do not rescale it to $[-0.5,0.5]^d$, unlike in the other experiments.

Regarding the injection of a deterministic ALD noise schedule during denoising score-matching training, we find that the cosine schedule yields better empirical performance, while the noise schedule corresponding to the EDM time-step discretization is also a natural option.

\subsection{Density evaluation of a diffusion model}\label{diffusion_model_density}

\citet{chen2018neural} showed that for a random variable whose probability density evolves over time, with dynamics $d\x = \tilde{f}(\x_t, t)\,dt$ (where $\tilde{f}$ is Lipschitz continuous in $\x $ and continuous in $t $), the log-density at a point $\x_0$ is given by
\begin{align}\label{instantenous_change_of_variable}
\log p_0 (\x_0) = \log p_T(\x_T) + \int_0^T \nabla \cdot \tilde{f}(\x_t, t) \, dt,
\end{align}
where $\nabla \cdot $ denotes the divergence operator, which is equal to the trace of the Jacobian. In practice, the divergence is often approximated using the Skilling–Hutchinson trace estimator \citep{grathwohl2018ffjord},
\begin{align*}
\nabla \cdot \tilde{f}(\x) = \mathbb{E}_{\boldsymbol\epsilon \sim \mathcal{N}(\mathbf{0}, \mathbf{I})}[\boldsymbol\epsilon^\intercal  J_{\tilde{f}}(\x) \boldsymbol\epsilon],
\end{align*}
where the expectation is typically estimated using a finite number of samples.

Now, applying this to EDM-type diffusion model, which is characterized by the probability-flow ODE
\begin{align}\label{EDM_probODE}
    d\x = -\sigma \nabla \log p_{\sigma}(\x) \, d\sigma,
\end{align}
we can compute the probability density at a point $\x $ as
\begin{align}\label{EDM_density_evaluation}
\log p (\x) \approx \log \mathcal{N}(\x_{\sigma_{\text{max}}}; \mathbf{0}, \sigma_{\text{max}}\mathbf{I}) - \int_{\sigma_{\text{min}}}^{\sigma_{\text{max}}} \sigma \nabla \cdot \nabla \log p_{\sigma}(\x_{\sigma}) \, d\sigma,
\end{align}
where the score $\nabla \log p_{\sigma}(\x)$ is approximated by the score network
$\s_{\theta}(\x,\sigma)$, and the divergence term is estimated using the
Skilling–Hutchinson estimator.

The ODE in Eq.~\ref{EDM_probODE} is often stiff at small noise scales, requiring a careful choice of numerical integration method for stable results. To integrate Eq.~\ref{EDM_density_evaluation}, we solve the coupled system
\begin{align*}
\frac{d\x}{d\sigma} = -\sigma\, \s_{\theta}(\x,\sigma), \qquad
\frac{d\log p(\x)}{d\sigma} = \sigma\, \nabla \cdot \s_{\theta}(\x,\sigma),
\end{align*}
where the latter ODE tracks the accumulation of the log-density.

To solve the coupled system, we use the implicit Adams–Bashforth–Moulton black-box ODE solver implemented in the \emph{torchdiffeq} package \citep{torchdiffeq}. We compute the divergence exactly using automatic differentiation, although the Skilling–Hutchinson estimator also worked, sometimes even yielding better results. Finally, to further stabilize the density estimates, we clamp the importance weights $1/p(\x)$ to the interval defined by their 1st and 90th percentiles.

\section{RUM under the space reparameterization}\label{appendix_space_reparametrization}

\renewcommand{\thefigure}{D.\arabic{figure}}
\setcounter{figure}{0}
\renewcommand{\thetable}{D.\arabic{table}}
\setcounter{table}{0}
\renewcommand{\theequation}{D.\arabic{equation}}
\setcounter{equation}{0}

This appendix studies the exponential RUM and the Bradley–Terry model under space reparameterization, and verifies that both RUMs are invariant under this transformation. This justifies our approach of transforming possible non-uniform $\lambda(\x)$ into a uniform distribution.

The invariance to the space reparameterization holds for Fechnerian RUMs with the winner density,
\begin{align*}
        p_{w}(\x) = 2 \lambda(\x) \int_{\mathcal{X}} F\left(\log p(\x) - \log p(\x')\right)\lambda(\x')d\x',
\end{align*}
where $F$ is a cumulative distribution function of the choice probability. Let a transformation $T$ push the sampling density $\lambda$ into the uniform distribution on the hypercube, that is $T_{\#}\lambda = \text{Unif}([0,1]^d)$. It is enough to show that the winner density in the transformed space has the same form but with uniform sampling density and different belief density. To that end, denote $\mathbf{y} = T(\x)$ and note that 
\begin{align*}
   1 = \lambda(T^{-1}(\mathbf{y}))|\det \nabla T^{-1}(\mathbf{y})|.
\end{align*}
 Hence, $\lambda(\x)=\lambda(T^{-1}(\mathbf{y}))$ in the front of the integral cancels the volume change under the transformation $T$. The integral in the new coordinate system $\mathbf{y}'$ can be computed by the change of variable $\int_{\mathcal{X}} G_{\mathbf{y}} d([T^{-1}]_{\#}\mu) = \int_{[0,1]^d} [G_{\mathbf{y}} \circ T^{-1}]d\mu$ for $G_{\mathbf{y}}(\x'):=F\left(\log p(T^{-1}(\mathbf{y})) - \log p(\x')\right)$ and $\mu$ denotes the Lebesgue measure on the hypercube $[0,1]^d$. The winner density in the transformed space reads as
\begin{align*}
        p_{w,trans}(\mathbf{y}) = 2  \int_{[0,1]^d} F\left(\log p(T^{-1}(\mathbf{y})) - \log p(T^{-1}(\mathbf{y}'))\right)d\mathbf{y}'.
\end{align*}
That is, the winner density in the transformed space follows the same RUM than in the original space, but with a uniform sampling distribution and a belief density $p_{trans}(\mathbf{y}) \propto p(T^{-1}(\mathbf{y}))$. Note that the normalization of $p_{trans}(\mathbf{y})$  is irrelevant, as the constant cancels out within the logarithmic utility difference. Consequently, the scale of the RUM noise (parameterized by $F$) remains invariant under the space reparameterization.

\clearpage

\section{Experimental details}\label{appendix-experimental-details}

\renewcommand{\thefigure}{E.\arabic{figure}}
\setcounter{figure}{0}
\renewcommand{\thetable}{E.\arabic{table}}
\setcounter{table}{0}
\renewcommand{\theequation}{E.\arabic{equation}}
\setcounter{equation}{0}

\subsection{Target distributions}\label{appendix-target-distributions}

The log unnormalized densities of the target distributions used in the synthetic experiments are provided below.

\begin{flalign*}
&\textbf{Onemoon2D}:\qquad -\frac{1}{2} \left( \frac{\| \x \| - 2}{0.2} \right)^2 - \frac{1}{2} \left( \frac{\x_1 + 2}{0.3} \right)^2\\
&\textbf{Twomoons2D}: -\frac{\left( \norm{\x} - 1\right) ^2}{0.08} - \frac{\left( \left| \x_1\right| - 2\right) ^2}{0.18} + \log\left( 1 + e^{-\frac{4\x_1}{0.09}}\right)\\
&\textbf{Ring2D}: \log\left( \sum_{i=1}^k \left( \frac{32}{\pi} e^{-32\left(\norm{\x} - i - 1 \right) ^2 }\right)\right),\quad \text{where}\  k=1\\
&\textbf{Stargaussian6D}:\qquad 
\log \left( \frac{1}{2}  \mathcal{N}(\x \mid \boldsymbol{\mu}, \Sigma_1)  + \frac{1}{2}  \mathcal{N}(\x \mid \boldsymbol{\mu}, \Sigma_2)  \right)
,\ \\
&\sigma^2 = 1,\ \rho = 0.9,\ d=6,\ \boldsymbol{\mu} = 3 \mathbf{1}_d,\
\Sigma_1 = \begin{pmatrix}
\sigma^2 & \rho \sigma^2 & \rho \sigma^2 & \cdots & \rho \sigma^2 \\
\rho \sigma^2 & \sigma^2 & \rho \sigma^2 & \cdots & \rho \sigma^2 \\
\rho \sigma^2 & \rho \sigma^2 & \sigma^2 & \cdots & \rho \sigma^2 \\
\vdots & \vdots & \vdots & \ddots & \vdots \\
\rho \sigma^2 & \rho \sigma^2 & \rho \sigma^2 & \cdots & \sigma^2
\end{pmatrix},\\
&\Sigma_2 = \begin{pmatrix}
\sigma^2 & -\rho \sigma^2 & \rho \sigma^2 & \cdots & (-1)^{d-1} \rho \sigma^2 \\
-\rho \sigma^2 & \sigma^2 & -\rho \sigma^2 & \cdots & (-1)^{d-2} \rho \sigma^2 \\
\rho \sigma^2 & -\rho \sigma^2 & \sigma^2 & \cdots & (-1)^{d-3} \rho \sigma^2 \\
\vdots & \vdots & \vdots & \ddots & \vdots \\
(-1)^{d-1} \rho \sigma^2 & (-1)^{d-2} \rho \sigma^2 & (-1)^{d-3} \rho \sigma^2 & \cdots & \sigma^2
\end{pmatrix} \\
&\textbf{Mixturegaussians}, d \in \{4,10\}:\qquad 
\log \left( \frac{1}{4} \sum_{i=1}^{4} \exp\left( 
- \frac{1}{2} (\mathbf{x} - \boldsymbol{\mu}_i)^\top \Sigma_i^{-1} (\mathbf{x} - \boldsymbol{\mu}_i) 
\right) \right),\\
&\text{where}\quad 
\boldsymbol{\mu}_i = r \cdot \frac{\mathbf{v}_i}{\| \mathbf{v}_i \|},\quad r = 3,\quad 
\mathbf{v}_1 = \mathbf{1}_d,\ \mathbf{v}_2 = -\mathbf{1}_d,\ 
\mathbf{v}_3 = \left[ (-1)^j \right]_{j=1}^d,\ \mathbf{v}_4 = -\mathbf{v}_3,\\
&\Sigma_i = \mathbf{Q}_i \cdot \mathrm{diag}(\sigma^2_0, \sigma^2, \ldots, \sigma^2) \cdot \mathbf{Q}_i^\top,\quad 
\sigma^2_0 = 1,\ \sigma^2 = 0.1,\ \mathbf{Q}_i = [\hat{\boldsymbol{\mu}}_i, \ldots] \in \mathbb{R}^{d \times d} \\
&\textbf{Gaussian},\ D \in \{4,16\}:\quad
-\tfrac{1}{2}(\x-\boldsymbol{\mu})^\top \Sigma^{-1} (\x-\boldsymbol{\mu}),\
\boldsymbol{\mu} = 2 \begin{pmatrix}
(-1)^1 \\ \vdots \\ (-1)^d
\end{pmatrix},\\[1ex]
&\Sigma =
\begin{pmatrix}
\frac{d}{10} & \frac{d}{15} & \cdots & \frac{d}{15} \\
\frac{d}{15} & \frac{d}{10} & \cdots & \frac{d}{15} \\
\vdots & \vdots & \ddots & \vdots \\
\frac{d}{15} & \frac{d}{15} & \cdots & \frac{d}{10}
\end{pmatrix}
\end{flalign*}

\subsection{Other experimental details}

\textbf{Hyperparameters and optimization details}. The score models are trained for varying number of iterations ($d=2:8192$, $2<d<10:12288$, $d\geq10:15360$) with the Adam optimizer \citep{kingma2014adam} and a batch size of $\min\{n,4000\}$ pairwise comparisons, where $n$ is the number of pairwise comparisons in the dataset. For the 2D experiments, we follow \citep{karras2024guiding} and use an adaptive learning rate, specifically a decay schedule of $\alpha_{ref} / \max(\text{iter},\text{iter}_{ref}, 1)$, with $\alpha_{ref} = 0.005$ and $\text{iter}_{ref} = 1024$ iterations. We use a power-function EMA profile with $\sigma_{ref} = 0.01$. The setup is somewhat sensitive to hyperparameters, and performance can vary depending on their tuning. We expect to achieve better or worse performance in the experiments depending on how well the hyperparameters are tuned. The chosen hyperparameters are likely suboptimal, and we expect performance gains, especially in higher-dimensional experiments, if the hyperparameters are well tuned.

\textbf{Environment}. All experiments are conducted on a server equipped with nodes containing dual Intel Xeon Cascade Lake processors (20 cores each, 2.1GHz). While exact training times and memory usage were not recorded, the datasets and score network architectures used are relatively lightweight.

\textbf{Experiment replications}. Every experiment was replicated with 10 different seeds, ranging from 1 to 10.

\textbf{Baseline}. We used the official implementation of \citep{mikkola2024prefernetialflows} and the provided config files to match the hyperparameter configuration used in their experiments to the closest experiment in our paper. For example, for 2D experiments, we use the config file that was used in their Onemoon2D experiment.

\subsection{Robustness to RUM noise family and noise level}\label{appendix_section_ablation}

To study robustness of the method for misspecification in the data-generation process, we rerun Onemoon2D, Twomoons2D, and Ring2D by varying the true data-generation RUM noise family $W$ and the noise level $s$. In all cases, our method assumes the Bradley–Terry model with fixed noise level $s = \sqrt{6/\pi^2}$. Table~\ref{experimental-results-ablation} reports the results.

The results suggest that lower true noise levels generally lead to higher-quality estimates (note that for $W \sim \mathrm{Exp}(s)$, a smaller $s$ corresponds to a higher noise level). However, using a true noise level that roughly matches the assumed model often yields better or comparable performance than choosing a very small noise level (note that the model is correctly specified only when $W \sim \mathrm{Gumbel}(0,s)$). This observation aligns with the theoretical identifiability of the problem discussed in Section~\ref{section_RUMs}. Overall, the results appear relatively robust to misspecification of both the noise family and the noise level.

\begin{table}[t]
\begingroup
\centering
\caption{Robustness to RUM noise family and noise level. The method \emph{score}-$\tau(\mathbf{x})$ assumes the Bradley--Terry model with noise level $s = 0.7797$, while the data-generating process is varied across three RUMs: Exponential RUM ($W \sim \mathrm{Exp}(s)$), Bradley--Terry ($W \sim \mathrm{Gumbel}(0,s)$), and Thurstone--Mosteller ($W \sim \mathcal{N}(0,s^{2})$), each evaluated at noise levels $s \in \{0.1, 0.7797, 1.0, 5.0\}$. Reported values are averages over 10 repetitions of the Wasserstein distance ($\downarrow$).}
\label{experimental-results-ablation}
{\small
\vspace{0.1cm}
\begin{tabular}{lcccc}
\toprule
 & $s=0.1$ & $s=0.7797$ & $s=1.0$ & $s=5.0$ \\
\midrule
\multicolumn{5}{l}{\textbf{Onemoon2D}} \\
$W \sim \text{Exp}(s)$ & 0.693 $\scriptsize{(\pm 0.078)}$ & 0.266 $\scriptsize{(\pm 0.122)}$ & 0.252 $\scriptsize{(\pm 0.122)}$ & 0.242 $\scriptsize{(\pm 0.117)}$ \\
$W \sim \text{Gumbel}(0,s)$ & 0.358 $\scriptsize{(\pm 0.138)}$ & 0.366 $\scriptsize{(\pm 0.145)}$ & 0.372 $\scriptsize{(\pm 0.149)}$ & 0.599 $\scriptsize{(\pm 0.132)}$ \\
$W \sim \text{Normal}(0,s^2)$ & 0.294 $\scriptsize{(\pm 0.139)}$ & 0.318 $\scriptsize{(\pm 0.124)}$ & 0.321 $\scriptsize{(\pm 0.122)}$ & 0.549 $\scriptsize{(\pm 0.119)}$ \\
\midrule
\multicolumn{5}{l}{\textbf{Twomoons2D}} \\
$W \sim \text{Exp}(s)$ & 0.577 $\scriptsize{(\pm 0.069)}$ & 0.348 $\scriptsize{(\pm 0.125)}$ & 0.371 $\scriptsize{(\pm 0.127)}$ & 0.383 $\scriptsize{(\pm 0.139)}$ \\
$W \sim \text{Gumbel}(0,s)$ & 0.481 $\scriptsize{(\pm 0.090)}$ & 0.440 $\scriptsize{(\pm 0.089)}$ & 0.446 $\scriptsize{(\pm 0.094)}$ & 0.500 $\scriptsize{(\pm 0.060)}$ \\
$W \sim \text{Normal}(0,s^2)$ & 0.375 $\scriptsize{(\pm 0.144)}$ & 0.345 $\scriptsize{(\pm 0.158)}$ & 0.346 $\scriptsize{(\pm 0.160)}$ & 0.447 $\scriptsize{(\pm 0.080)}$ \\
\midrule
\multicolumn{5}{l}{\textbf{Ring2D}} \\
$W \sim \text{Exp}(s)$ & 0.557 $\scriptsize{(\pm 0.071)}$ & 0.423 $\scriptsize{(\pm 0.066)}$ & 0.427 $\scriptsize{(\pm 0.068)}$ & 0.453 $\scriptsize{(\pm 0.089)}$ \\
$W \sim \text{Gumbel}(0,s)$ & 0.462 $\scriptsize{(\pm 0.128)}$ & 0.387 $\scriptsize{(\pm 0.075)}$ & 0.368 $\scriptsize{(\pm 0.074)}$ & 0.449 $\scriptsize{(\pm 0.041)}$ \\
$W \sim \text{Normal}(0,s^2)$ & 0.495 $\scriptsize{(\pm 0.109)}$ & 0.430 $\scriptsize{(\pm 0.086)}$ & 0.420 $\scriptsize{(\pm 0.081)}$ & 0.491 $\scriptsize{(\pm 0.073)}$ \\
\bottomrule
\end{tabular}
}
\endgroup
\end{table}

\subsection{Runtime breakdown}\label{appendix_section_runtimes} 
The computation times are profiled for the experiments in Section \ref{sec_experiments}. Table \ref{tab:totaltimes} reports the total runtime allocated to three main components: DSM training of the score network (Section \ref{sec_score_training}); estimation of the tempering field, including training the density ratio model $r_{\theta}$ (Section \ref{sec_temp_field_est}); and score-scaled ALD sampling (Section \ref{sec_belief_sampling}) with the number of samples varying from 25k to 40k. The final column shows the average time required to generate a single sample. In practice, this also depends on the sampling batch size, which we have not optimized and which is constrained by available memory.
\begin{table}[h]
\begingroup
\caption{{Breakdown of computation times (in seconds). The table reports mean and standard deviation over 10 replicates.}}
\label{tab:totaltimes}
\centering

\begin{tabular}{@{}lcccc@{}}
\toprule
 & & & \multicolumn{2}{c}{sampling} \\
 \cmidrule(lr){4-5}
 & DSM training & $\tau(\mathbf{x})$ estimation & total & per sample \\
\midrule
Onemoon2D & 80 ($\pm$ 16) & 49 ($\pm$ 12) & 690 ($\pm$ 45) & 0.0276 \\
Ring2D & 82 ($\pm$ 15) & 47 ($\pm$ 10) & 692 ($\pm$ 41) & 0.0277\\
Twomoons2D & 81 ($\pm$ 14) & 48 ($\pm$ 10) & 701 ($\pm$ 42) & 0.0280 \\
Mixturegaussians4D & 262 ($\pm$ 51) & 4560 ($\pm$ 873) & 6958 ($\pm$ 743) & 0.2783 \\
Onegaussian4D & 283 ($\pm$ 46) & 2660 ($\pm$ 279) & 7187 ($\pm$ 404) & 0.2872\\
Stargaussian6D & 440 ($\pm$ 54) & 3364 ($\pm$ 496) & 12053 ($\pm$ 729) & 0.4018 \\
Mixturegaussians10D & 653 ($\pm$ 47) & 6728 ($\pm$ 1030) & 27314 ($\pm$ 853) & 0.6829 \\
Onegaussian16D & 797 ($\pm$ 24) & 4977 ($\pm$ 223) & 47477 ($\pm$ 727) & 1.1870\\
\bottomrule
\end{tabular}
\endgroup
\end{table}

\clearpage

\section{Plots}\label{appendix_plots}

\renewcommand{\thefigure}{F.\arabic{figure}}
\setcounter{figure}{0}
\renewcommand{\thetable}{F.\arabic{table}}
\setcounter{table}{0}
\renewcommand{\theequation}{F.\arabic{equation}}
\setcounter{equation}{0}

\subsection{Plots of learned belief densities}\label{plots_learned_densities}

\begin{figure*}[htbp]
  \centering
  \subfigure[Score-based]{\includegraphics[scale=0.18]{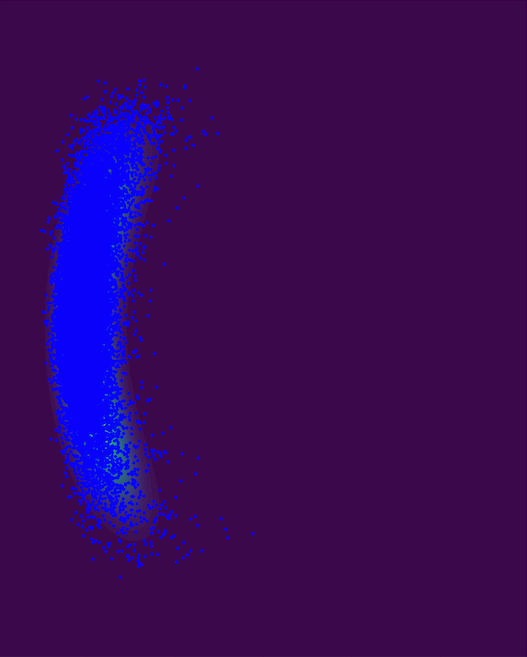}}
  \subfigure[Flow]{\includegraphics[scale=0.1485]{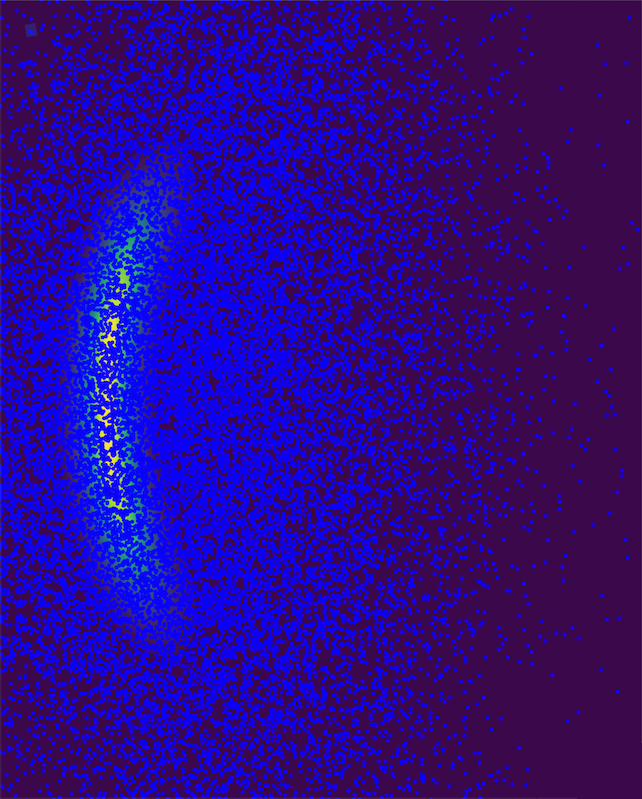}}
  \caption{Onemoon2D experiment. The target distribution is shown as a heatmap, and samples from the learned model are overlaid in blue. (a) Samples from the score-base model. For this particular seed, the estimated tempering field is not too far from the true field, resulting in a good fit. (b) Samples from the flow model.}\label{fig_onemoon2d_exp}
\end{figure*}

\begin{figure*}[htbp]
  \centering
  \subfigure[Score-based]{\includegraphics[scale=0.18]{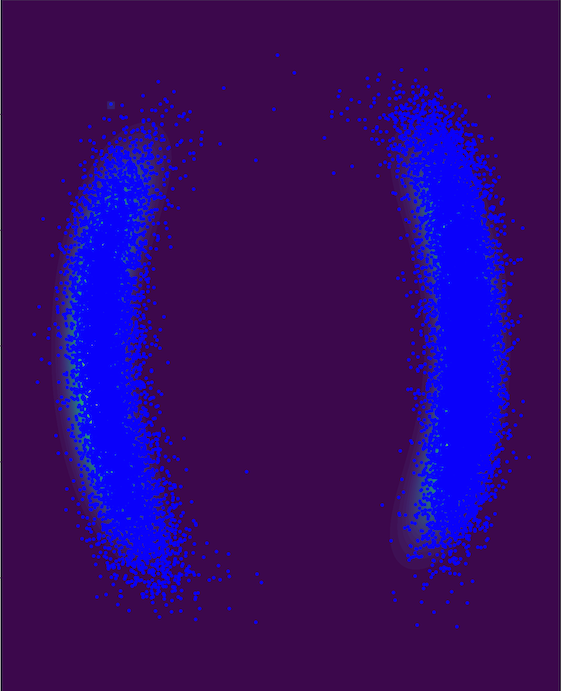}}
  \subfigure[Flow]{\includegraphics[scale=0.184]{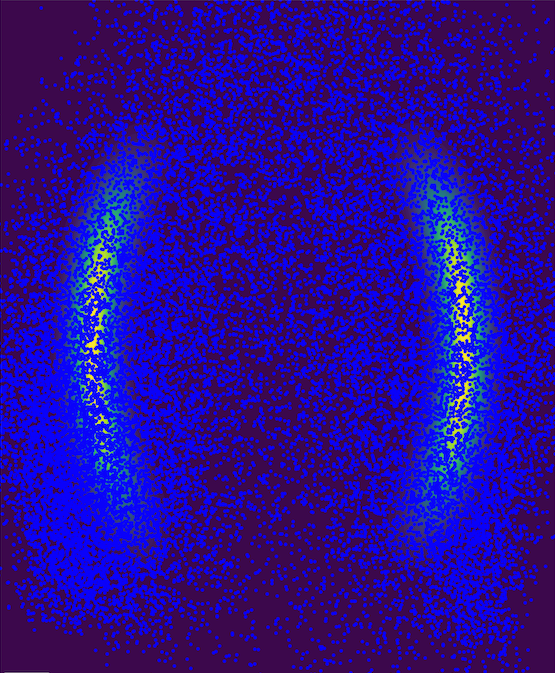}}
  \caption{Twomoons2D experiment. The target distribution is shown as a heatmap, and samples from the learned model are overlaid in blue. (a) Samples from the score-base model. For this particular seed, the estimated tempering field is not too far from the true field, resulting in a good fit. (b) Samples from the flow model.}\label{fig_twomoons2d_exp}
\end{figure*}

\begin{figure*}[htbp]
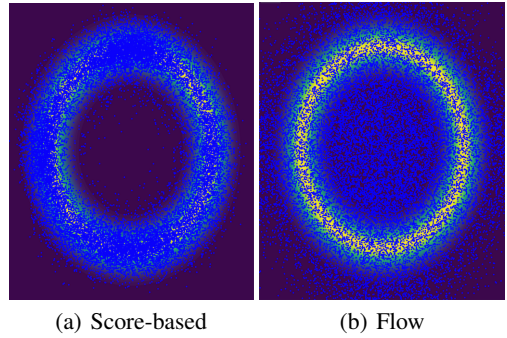

  \centering
  \subfigure[Score-based]{\includegraphics[scale=0.18]{Figures/score_ring_seed8.png}}
  \subfigure[Flow]{\includegraphics[scale=0.1807]{Figures/flow_ring_seed8.png}}
  \caption{Ring2D experiment. The target distribution is shown as a heatmap, and samples from the learned model are overlaid in blue. (a) Samples from the score-base model. (b) Samples from the flow model.}\label{fig_ring2d_exp}
\end{figure*}

\begin{figure*}[htbp]
  \centering
  \includegraphics[scale=0.4]{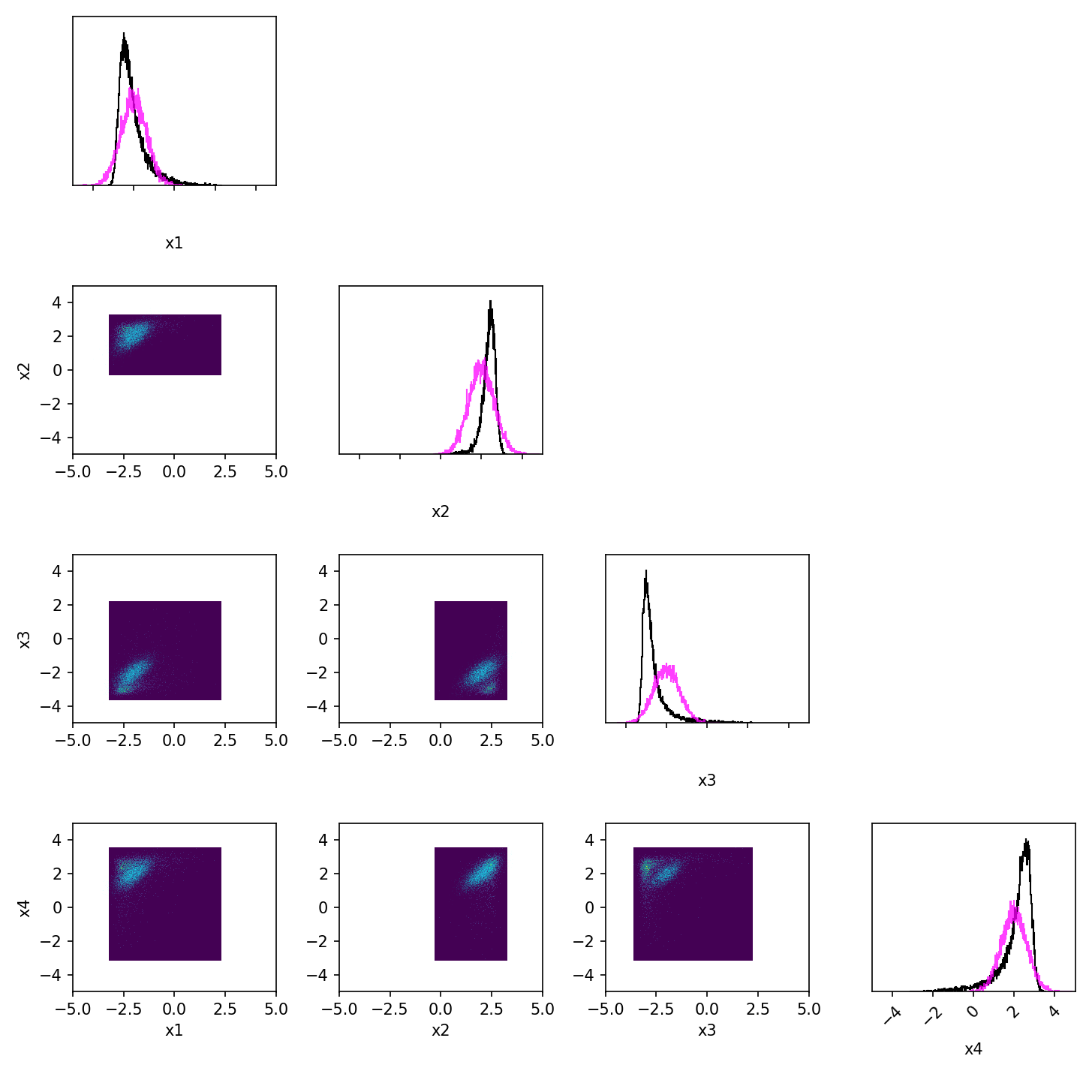}
  \caption{Gaussian4D experiment. The target distribution is depicted by light blue contour points and its marginal by a pink curve. The learned diffusion model is depicted by greenish blue contour sample points and its marginal by a black curve.}\label{fig_gaussian4d_plot}
\end{figure*}

\begin{figure*}[htbp]
  \centering
  \includegraphics[scale=0.5]{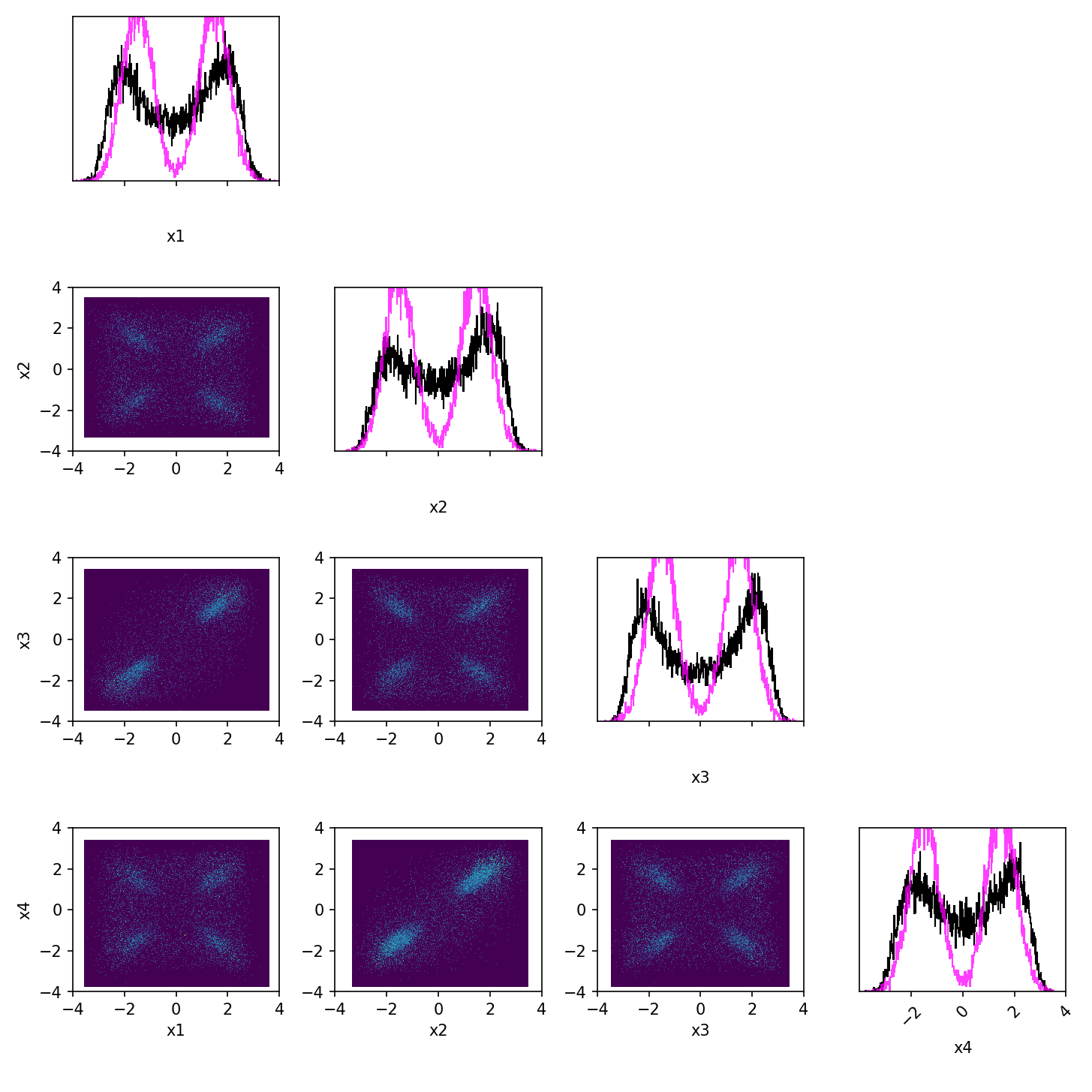}
  \caption{Mixturegaussians4D experiment. The target distribution is depicted by light blue contour points and its marginal by a pink curve. The learned diffusion model is depicted by greenish blue contour sample points and its marginal by a black curve.}\label{fig_mixturegaussians4d_plot}
\end{figure*}

\begin{figure*}[htbp]
  \centering
  \includegraphics[scale=0.38]{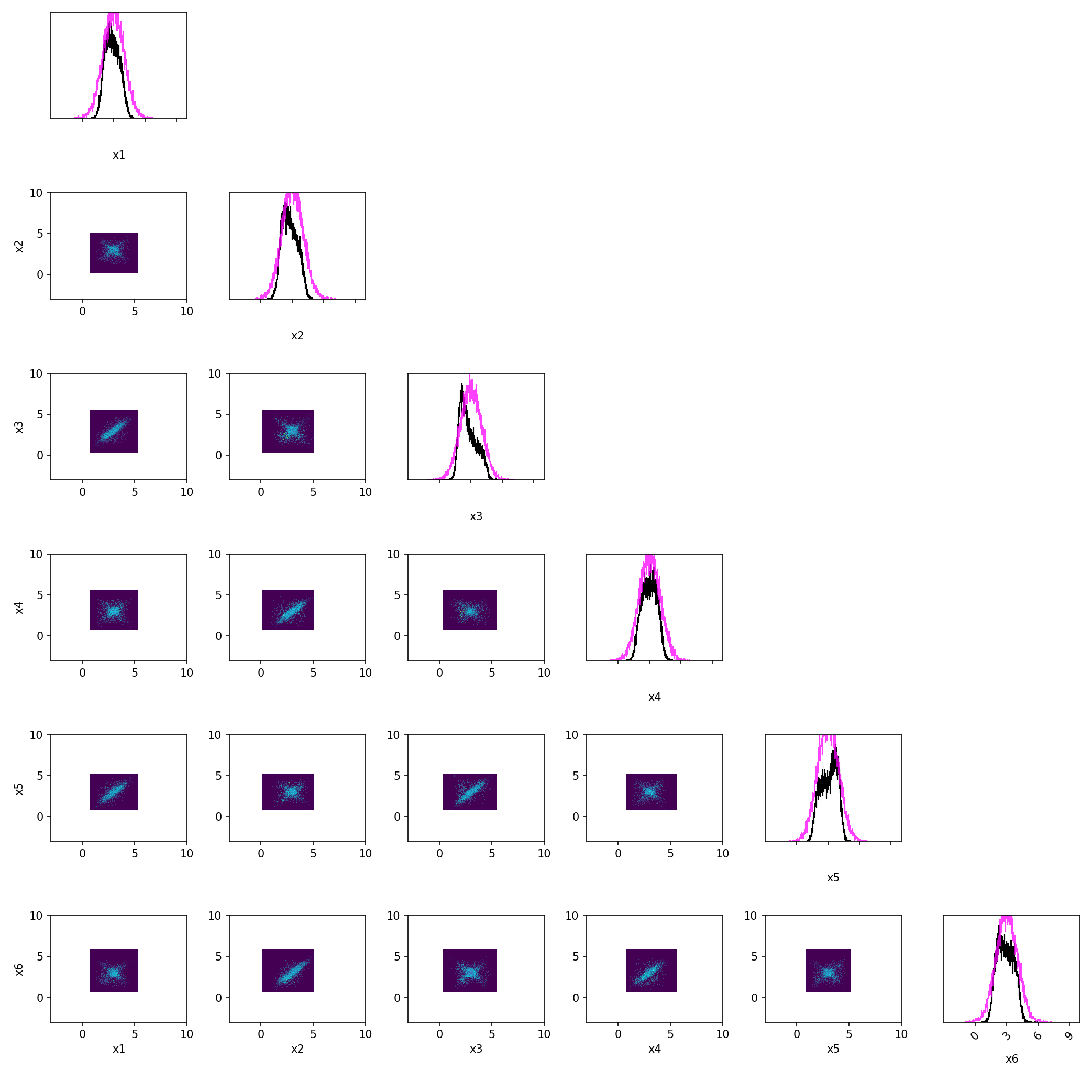}
  \caption{Stargaussian6D experiment. The target distribution is depicted by light blue contour points and its marginal by a pink curve. The learned diffusion model is depicted by greenish blue contour sample points and its marginal by a black curve.}\label{fig_stargaussian6d_plot}
\end{figure*}
\begin{figure*}[htbp]
  \centering
  \includegraphics[scale=0.35]{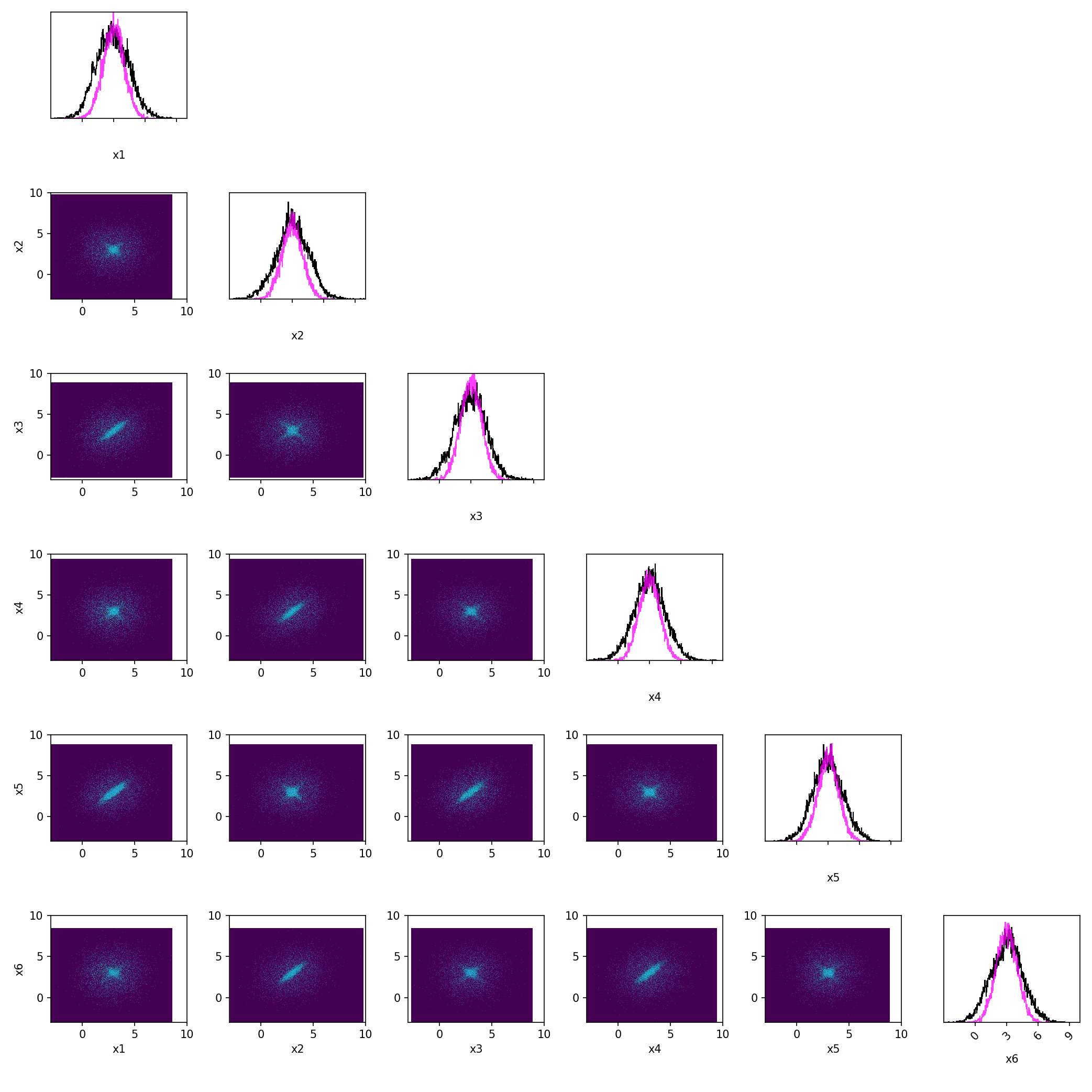}
  \caption{Stargaussian6D experiment when using the baseline flow-based method. The target distribution is depicted by light blue contour points and its marginal by a pink curve. The learned flow model is depicted by greenish blue contour sample points and its marginal by a black curve.}\label{fig_stargaussians6d_plot_baseline}
\end{figure*}

\begin{figure*}[htbp]
  \centering
  \includegraphics[scale=0.23]{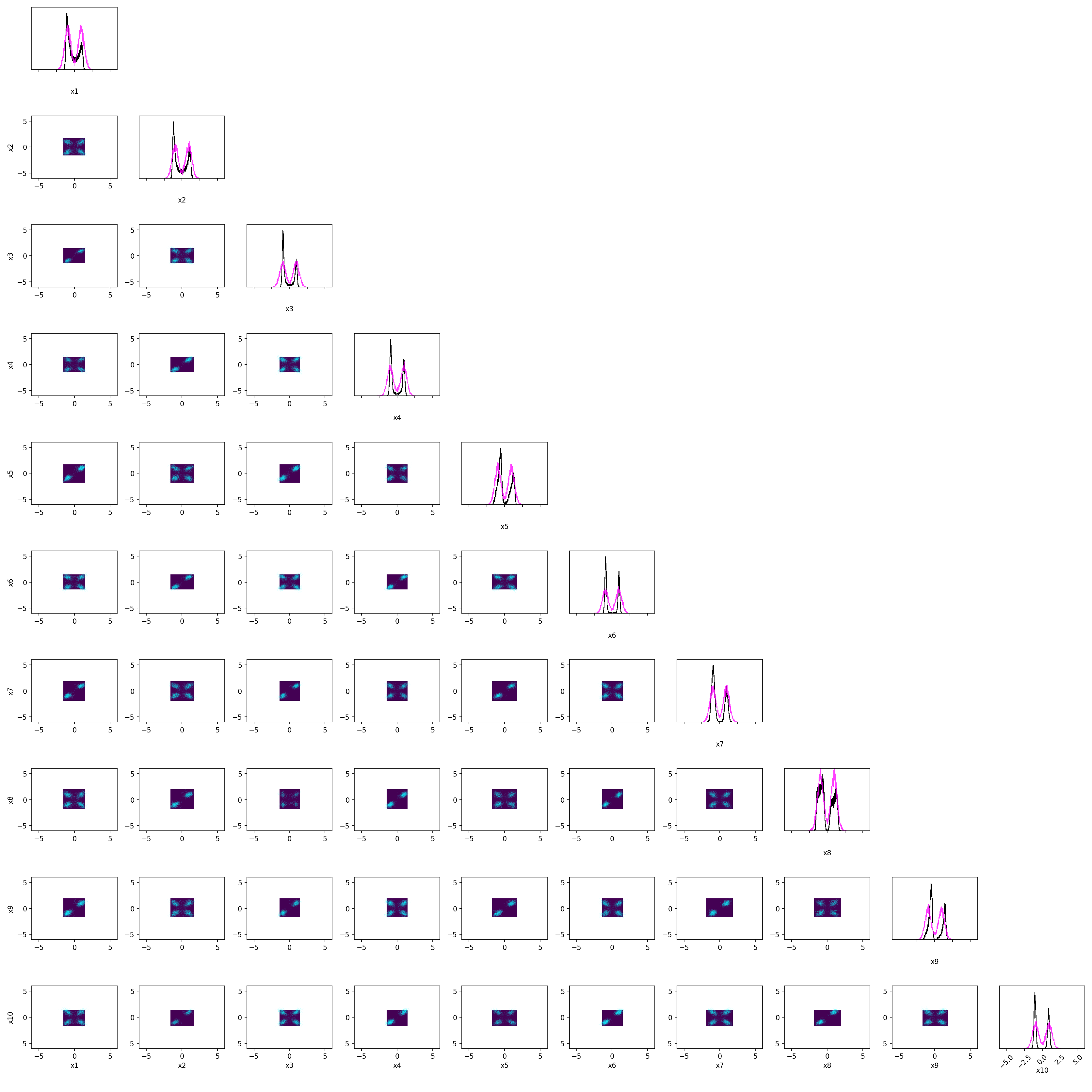}
  \caption{Mixturegaussians10D experiment. The target distribution is depicted by light blue contour points and its marginal by a pink curve. The learned diffusion model is depicted by greenish blue contour sample points and its marginal by a black curve.}\label{fig_mixturegaussians10d_plot}
\end{figure*}

\begin{figure*}[htbp]
  \centering
  \includegraphics[scale=0.15]{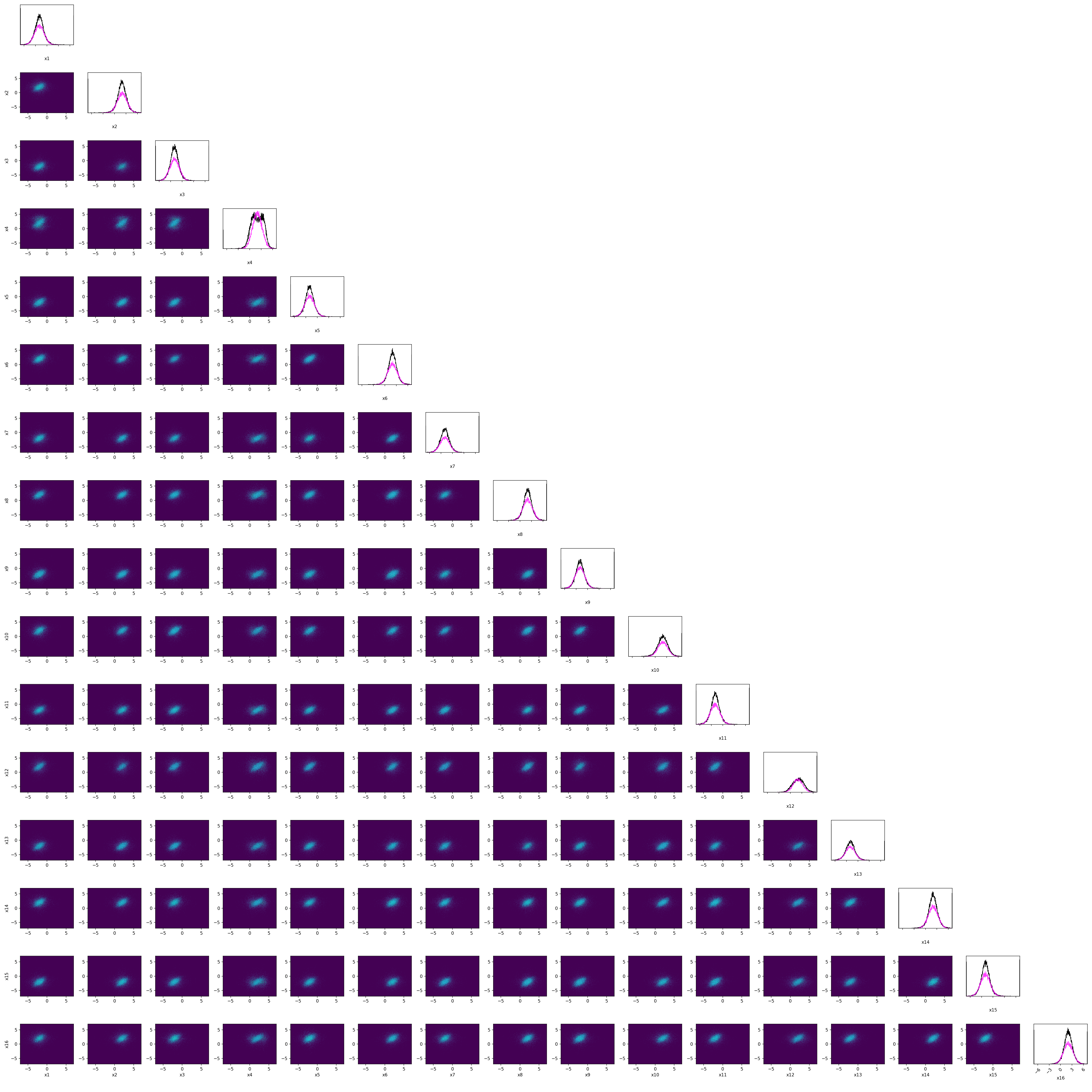}
  \caption{Gaussian16D experiment. The target distribution is depicted by light blue contour points and its marginal by a pink curve. The learned diffusion model is depicted by greenish blue contour sample points and its marginal by a black curve.}
\end{figure*}

\subsection{LLM experiment}\label{appendix:llm_exp}

Details of the data generation for the LLM experiment, including the prompts used, are provided in \citet[][Appendix C.2]{mikkola2024prefernetialflows}. We reuse their data and scripts to convert the 5-wise rankings into the pairwise comparisons assumed in our setup: \href{https://github.com/petrus-mikkola/prefflow}{https://github.com/petrus-mikkola/prefflow}.

Fig.~\ref{fig-llmexp-full} completes the partial plot in main text for the LLM experiment. The elicited $2D$ and $1D$ marginals have the same support as the true data distribution marginals, and their shapes are also similar, with the distinction that score-based methods tend to generate Gaussian-like marginals. The only exception is the variable \emph{AveOccup}, whose marginal appears to have an unreasonably long tail.

\begin{figure}[t]
  \centering
  \includegraphics[scale=0.11]{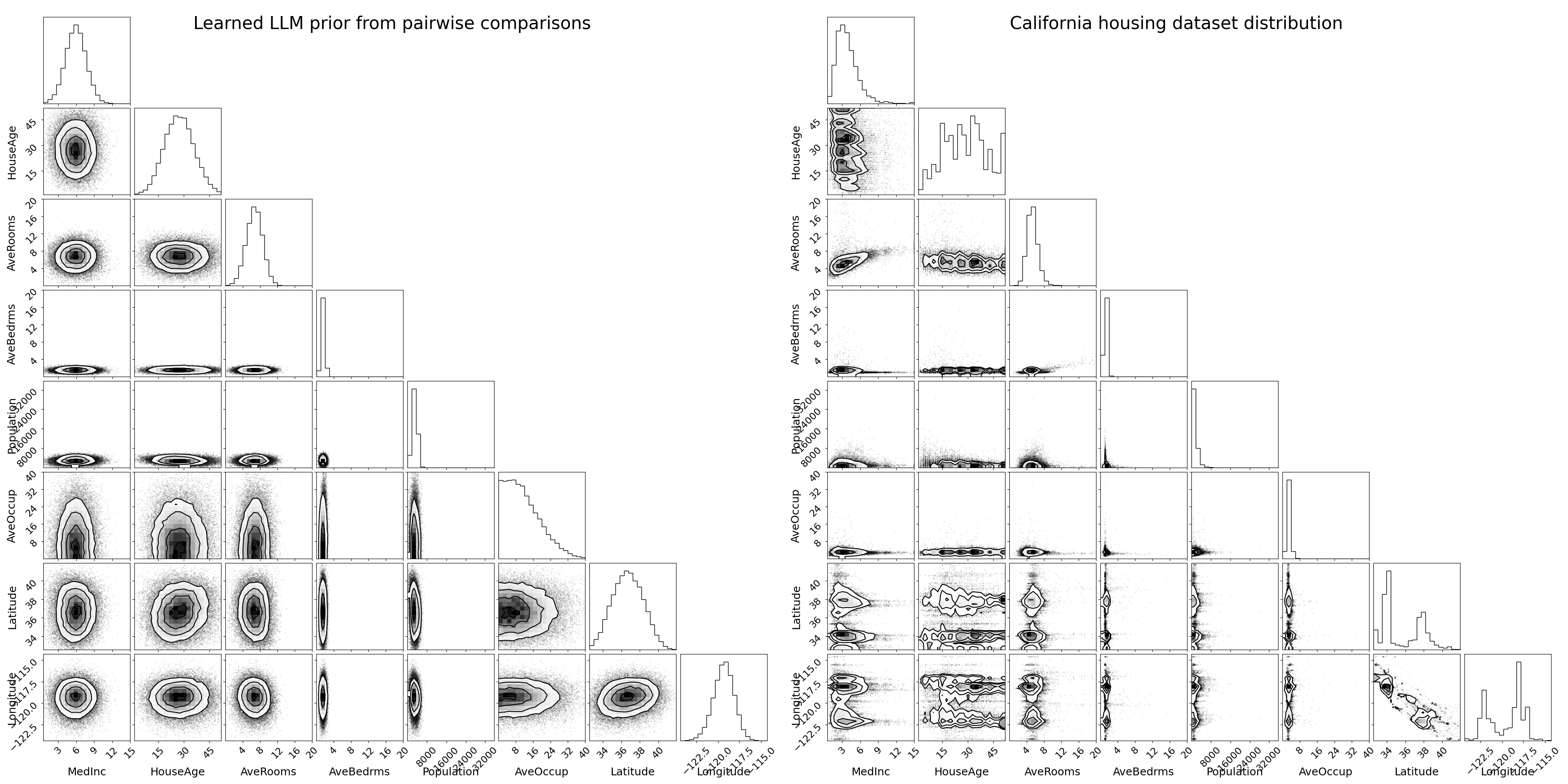}
  \caption{Full result plot for the LLM expert elicitation experiment, complementing the partial plot presented in Fig.~\ref{fig-llmexp-partial}.}\label{fig-llmexp-full}
\end{figure}

Fig.~\ref{fig_llm_baseline} compares our score-based diffusion method and the flow method of learning the LLM prior from pairwise comparisons in the LLM experiment, highlighting that the score-based method results in smoother estimates than the flow method. Table~\ref{tab-llm-exp} summarizes the densities in a quantitative manner by reporting the means for all variables. 

\begin{figure}[t]
  \centering
  \includegraphics[scale=0.11]{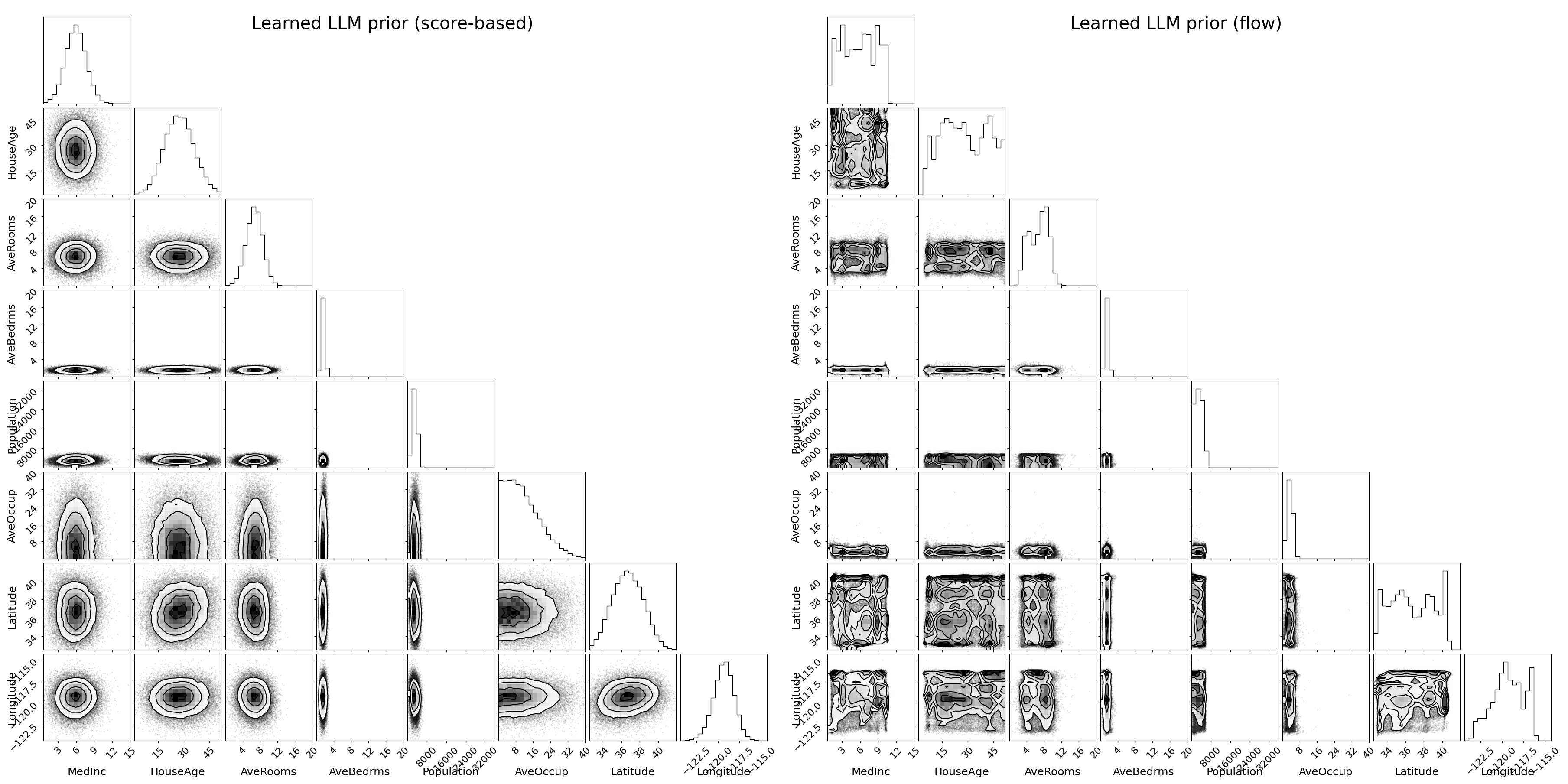}
  \caption{Comparison of the results when learning the LLM prior from pairwise comparisons using our score-based diffusion method (left) and the flow based method (right).}\label{fig_llm_baseline}
\end{figure}

\begin{table}[t]
\caption{The means of the variables based on (first row) the distribution of the California housing dataset, (second row) the sample from the score-based diffusion model fitted to the LLM's feedback, and (third row) the sample from the flow model.}\label{tab-llm-exp}
\setlength{\tabcolsep}{1.5pt}
\begin{tabular}{l|cccccccc}
 & MedInc & HouseAge & AveRooms & AveBedrms & Population & AveOccup & Lat & Long \\ \hline
True data & 3.87 & 28.64 & 5.43 & 1.1 & 1425.48 & 3.07 & 35.63 & -119.57 \\
Score-based & 5.89 & 27.56 & 6.66 & 1.53 & 2997.96 & 4.70 & 36.69 & -119.37 \\
Flow & 5.83 & 28.48 & 6.68 & 1.49 & 2948.17 & 3.36 & 36.73 & -119.30 \\
\end{tabular}
\end{table}

\clearpage

\section{The Use of Large Language Models (LLMs)}\label{appendix_LLM_usage}
The data for Experiment 3 "LLM as a proxy for the expert" in Section \ref{sec_experiments} was obtained by prompting an LLM (Claude 3 Haiku by Anthropic, March 2024). Further, the first version of the Rosenblatt transformation implementation was developed using an LLM. This version was later improved, and the final version was verified to correctly transform points to the hypercube. The inverse transformation also worked in the tested cases. Finally, an LLM was used to check for writing and content errors in both the text and the code.

\end{document}